\documentclass[sigconf]{acmart}

\makeatletter

\renewcommand*{\@fnsymbol}[1]{%
  \ensuremath{\ifcase#1\or \dagger\or \ast\or \ddagger\or
     \mathsection\or \mathparagraph\or \|\or **\or \dagger\dagger
     \or \ddagger\ddagger \else\@ctrerr\fi}}
\makeatother

\AtBeginDocument{%
  }

\setcopyright{acmlicensed}
\copyrightyear{2026}
\acmYear{2026}
\setcopyright{cc}
\setcctype{by}
\acmConference[WWW '26]{Proceedings of the ACM Web Conference 2026}{April 13--17, 2026}{Dubai, United Arab Emirates}
\acmBooktitle{Proceedings of the ACM Web Conference 2026 (WWW '26), April 13--17, 2026, Dubai, United Arab Emirates}
\acmPrice{}
\acmDOI{10.1145/3774904.3792126}
\acmISBN{979-8-4007-2307-0/2026/04}

\usepackage{subfigure}
\usepackage{multirow}
\usepackage{multicol}
\usepackage{wrapfig}
\usepackage{amsthm}
\usepackage{balance}
\theoremstyle{plain}
\newtheorem{definition}{Definition}

\newtheorem{proposition}{Proposition}

\begin{document}

\title{LEDA: Latent Semantic Distribution Alignment for Multi-domain Graph Pre-training}

\author{Lianze Shan}
\authornote{Both authors contributed equally to this research.}
\email{shanlz2119@tju.edu.cn}
\affiliation{
  \institution{Tianjin University}
  \city{Tianjin}
  \country{China}
}

\author{Jitao Zhao}
\authornotemark[1]
\email{zjtao@tju.edu.cn}
\affiliation{
  \institution{Tianjin University}
  \city{Tianjin}
  \country{China}
}

\author{Dongxiao He}
\authornote{Corresponding author.}
\email{hedongxiao@tju.edu.cn}
\affiliation{%
  \institution{Tianjin University}
  \city{Tianjin}
  \country{China}
}

\author{Siqi Liu}
\email{siqiliu@tju.edu.cn}
\affiliation{%
  \institution{Tianjin University}
  \city{Tianjin}
  \country{China}
}

\author{Jiaxu Cui}
\email{cjx@jlu.edu.cn}
\affiliation{%
  \institution{Jilin University}
  \city{Changchun}
  \country{China}
}
\author{Weixiong Zhang}

\email{weixiong.zhang@polyu.edu.hk} 
\affiliation{%
  \institution{The Hong Kong Polytechnic University}
  \city{Kowloon}
  \country{Hong Kong}
}

\renewcommand{\shortauthors}{Lianze Shan et al.}

\begin{abstract}
Recent advances in generic large models, such as GPT and DeepSeek, have motivated the introduction of universality to graph pre-training, aiming to learn rich and generalizable knowledge across diverse domains using graph representations to improve performance in various downstream applications. However, most existing methods face challenges in learning effective knowledge from generic graphs, primarily due to simplistic data alignment and limited training guidance. The issue of simplistic data alignment arises from the use of a straightforward unification for highly diverse graph data, which fails to align semantics and misleads pre-training models. The problem with limited training guidance lies in the arbitrary application of in-domain pre-training paradigms to cross-domain scenarios. While it is effective in enhancing discriminative representation in one data space, it struggles to capture effective knowledge from many graphs. To address these challenges, we propose a novel Latent sEmantic Distribution Alignment (LEDA) model for universal graph pre-training. Specifically, we first introduce a dimension projection unit to adaptively align diverse domain features into a shared semantic space with minimal information loss. Furthermore, we design a variational semantic inference module to obtain the shared latent distribution. The distribution is then adopted to guide the domain projection, aligning it with shared semantics across domains and ensuring cross-domain semantic learning. LEDA exhibits strong performance across a broad range of graphs and downstream tasks. Remarkably, in few-shot cross-domain settings, it significantly outperforms in-domain baselines and advanced universal pre-training models.
\end{abstract}


\begin{CCSXML}
<ccs2012>
   <concept>
       <concept_id>10010147.10010257.10010293.10010294</concept_id>
       <concept_desc>Computing methodologies~Neural networks</concept_desc>
       <concept_significance>500</concept_significance>
       </concept>
 </ccs2012>
\end{CCSXML}

\ccsdesc[500]{Computing methodologies~Neural networks}



\keywords{Graph Neural Networks; Graph Self-Supervised Learning; Graph Pre-training}


\maketitle

\section{Introduction}
Graph pre-training has emerged as a powerful framework that aims to design proxy tasks and train encoders to learn knowledge with high generalizability \cite{graph-pretrain-survey}.
It has been widely applied in various  label-scarce scenarios, such as social networks \cite{social_network_2}, recommendation systems \cite{LightGCN}, and molecular analysis \cite{molecular_analysis}, and has played a pivotal role in graph learning research and development.
Recently, it has served as a core technique for developing graph foundation models \cite{MDGPT,samgpt,UniPrompt}, enabling them to acquire universal knowledge that enhances performance across diverse downstream tasks. 

Recent advances in graph pre-training have shifted from specific, application-oriented, self-supervised paradigms \cite{DGI,GRACE,BGRL} to generalizable and universal pre-training frameworks \cite{gcc,OFA,samgpt}.
Classical graph self-supervised pre-training frameworks train encoders to learn graph representations by designing contrastive or generative proxy tasks \cite{GRACE,BGRL,GVAE,GraphMAE}. 
While effective on applications within the same domains, these frameworks are difficult to generalize across diverse graphs due to the inherent high disparities of graph data \cite{FUG}.
With the advancement of universal large models \cite{bert,achiam2023gpt},
universal graph pre-training models have drawn much attention lately, aiming to learn more generalizable knowledge by training on a variety of graphs, enabling them to be readily applicable to many downstream tasks \cite{FUG,OFA,graphcontrol}.
To achieve this objective, early methods depended on structural encoding to ensure the generality of attribute-free graphs \cite{gcc}.
Therefore, these methods are restricted to structural generalizability.
More recent methods employ Language Models (LM) by converting node attributes to textual descriptions, enabling semantic encoding and cross-graph attribute alignment \cite{OFA}.
Another paradigm leverages Matrix Factorization (MF) methods, such as Principal Component Analysis (PCA) and Singular Value Decomposition (SVD), to project node features into a lower-dimensional space for alignment \cite{FUG,MDGPT}.
Based on the aligned data, most existing methods explore contrastive learning or link reconstruction objectives for model training \cite{gcc,FUG,MDGPT,OFA}.

\textbf{However, these methods face challenges in learning effective knowledge from numerous general graphs due to their simplistic data alignment strategies and limited knowledge acquisition capabilities.}
Firstly, simplistic alignment strategies lack the flexibility and expressiveness to handle highly diverse graphs, resulting in sub-optimal alignment when applied to generic graphs.
Specifically, structure-based methods fall short on attributed graphs as they ignore node feature semantics.
LM-based methods rely on textual attributes and are therefore inadequate for handling graphs with non-textual features.
MF-based methods align features by identifying principal variance directions but may retain noise or lose task-relevant semantics.
Moreover, the projected dimensions lack semantic clarity, which can potentially lead to confusion during pre-training.
Secondly, limited knowledge learning stems from the arbitrary application of in-domain pre-training paradigms to multi-domain scenarios.
Although the widely used contrastive learning and link reconstruction perform well in in-domain scenarios, they may become sub-optimal in multi-domain settings, as semantic ambiguity across samples makes it difficult to establish meaningful relations in the unified space. 
We further explore this issue in detail in Section \ref{sec:analysis_of_ugp}.
These limitations motivated us to consider \textit{how to design alignment strategies that preserve semantics across diverse graphs} and \textit{how to defince pre-training objectives that learn effective knowledge from multi-domain graphs.}

To address these challenges, we propose a novel \textbf{L}atent s\textbf{E}mantic \textbf{D}istribution \textbf{A}lignment (LEDA) model for multi-domain universal graph pre-training.
Specifically, we introduced a learnable domain projection unit that captures projection directions aligned with shared semantics, enabling the unification of multi-domain graphs.
Instead of relying on arbitrary variance-maximizing projections, we maximize the mutual information between original features and their projected representations to ensure each dimension retains essential information from the original feature.
Building upon the aligned feature, we further incorporate a latent distribution alignment module, which explicitly aligns the posterior distributions of individual domains with a shared prior.
This process also provides semantic guidance for the domain projection unit, promoting alignment along consistent directions across different domains.
We evaluate LEDA under multiple settings, including cross-domain node classification, few-shot scenarios, and cross-domain zero-shot graph classification. Experimental results across these settings demonstrate the effectiveness of LEDA.
Our contributions can be summarised as follows:

\begin{itemize}
    \item We revealed that existing methods are incapable of learning generalizable knowledge from multi-domain graphs due to their simplistic data alignment strategies and limited knowledge learning. We adopted a variational semantic modeling perspective to overcome these challenges.

    \item We proposed LEDA, a novel universal graph pre-training approach, which integrates a domain projection unit for adaptive feature projection and a latent distribution alignment module to jointly optimize latent inference and generalizable semantic modeling.
    
    \item We conducted extensive experiments to assess the effectiveness of our model, including cross-domain node classification, cross-domain graph classification, and few-shot scenarios using seven widely-used node classification datasets and four representative graph classification benchmarks.
\end{itemize}

\section{Related Work}
\label{appendix:related work}
\textbf{Graph Representation Learning.} Graphs are ubiquitous in real-world scenarios, such as  social networks \cite{social_network_2}, bioinformatics \cite{bioinformatics} and recommendation systems \cite{LightGCN}.
This leads to an extensive focus on Graph Representation Learning (GRL), which aims to encode graph data into low-dimensional continuous embedding space for every step in downstream tasks \cite{grl_survey}. 
Early works rely on shallow embedding methods, including matrix factorization \cite{Matrix_Factorization} and random walk-based \cite{randomwalk} methods, which struggle to jointly encode node features and topology. 
In recent years, Graph Neural Networks (GNNs) have achieved significant success, as demonstrated by models such as GCN \cite{GCN}, GAT \cite{GAT} and GraphSAGE \cite{GraphSAGE}, which effectively combine node features and topology simultaneously through message passing and aggregation mechanisms. 
These methods have achieved great success in a wide range of downstream tasks, including node classification, link prediction, and graph classification. 
However, training effective GNNs requires large amounts of manually labeled data, which is extensive and time-consuming to obtain. 

\textbf{Graph Pre-training.} To overcome the limitations of GNNs in label-scarce scenarios, graph pre-training has emerged as an effective paradigm for learning general knowledge and latent graph patterns \cite{graph-pretrain-survey}. 
As an early paradigm of graph pre-training, Graph Self-Supervised Learning (GSSL) has been extensively studied \cite{SGRL,E2Neg,Str-GCL}. 
GSSL mines data itself to generate self-supervised samples and designs proxy tasks for training GNN encoders in a self-supervised way. 
Recent advances in GSSL have predominantly evolved along two directions: (i) contrastive-based methods \cite{GRACE,DGI,BGRL}, which aim to maximize mutual information between different views of original graph, and (ii) generative-based methods \cite{GVAE,MaskGAE,GraphMAE}, which focus on reconstructing masked graph signals. 
While these methods have achieved remarkable success in label-scarce scenarios, most of them remain confined to in-domain distribution, making it difficult to extract transferable knowledge across diverse graphs. 

Inspired by the success of unified pre-training large models \cite{bert}, recent efforts have focused on developing unified graph pre-training models. 
Given the reliance of GNNs on fixed input dimensions and the semantic inconsistency of node features across graphs, some methods \cite{gcc,unilink} discard raw node features and instead encode topology-based positional information derived purely from graph structure.
While these methods enable better generalization across attribute-free graphs, their inability to incorporate attribute information results in suboptimal performance on attributed graphs, which are common in real-world scenarios.
To unify attribute information across graphs, methods such as OFA \cite{OFA} and ZEROG \cite{zeroG} convert graphs into text and utilize Language Models (LMs) to re-encode node attributes. 
Building on the unified input, OFA \cite{OFA} introduces virtual nodes to integrate essential information derived from the original graph and LMs, which is then propagated via GNNs. 
ZEROG \cite{zeroG} proposes a prompt-based subgraph sampling mechanism that captures semantic relevance through selected prompt nodes, while preserving structural characteristics via local neighborhood aggregation.
However, these methods are limited by the assumption that node attributes can be effectively textualized, and the use of LMs incurs substantial computational and resource costs \cite{FUG}. 
More recent studies have explored input unification across general graphs by applying dimension reduction techniques such as Principal Component Analysis (PCA)  and Singular Value Decomposition (SVD) to align node feature dimensions.
FUG \cite{FUG} revisits the theoretical connection between PCA and contrastive learning, introducing a dimensional encoder to achieve lossless feature alignment. While effective in unifying input dimensions, it heavily relies on data sampling and struggles to capture shared semantics across domains.
MDGPT \cite{MDGPT} aligns diverse feature dimensions by SVD and introduces the concept of domain tokens to unify semantics from multi-domain graphs.
However, the number of domain tokens increases with the number of input domains, which limits the model's flexibility and scalability.

Despite the progress made in recent efforts on universal graph pre-training, existing approaches still face intrinsic challenges in multi-domain settings. These limitations motivate us to reconsider how to design a universal graph pre-training framework that can effectively learn transferable knowledge from multi-domain graphs in a scalable and generalizable manner.

\section{Notations and Preliminary}
\label{Sec:Notations and Preliminary}
We now present the preliminary concepts and notations used in the paper. Sets are denoted by calligraphic letters (e.g., $\mathcal{G}$), matrices are represented in bold capital letters (e.g., \(\mathbf{X}\)), vectors are expressed in bold lowercase letters (e.g., $\mathbf{x}$, which denotes the vector of matrix $\mathbf{X}$), and scalars are denoted by lowercase letters (e.g., \(x\)). 

\textbf{Graph data.}
Given a graph $\mathcal{G} = (\mathcal{V}, \mathcal{E})$, we define $\mathcal{V}$ as the node set, where \(\mathcal{E} \subseteq \mathcal{V} \times \mathcal{V}\) denotes the edge set. Each node \(v_i\) is associated with a feature vector \(\mathbf{x}_i\), and the collective features of all nodes in graph \(\mathcal{G}\) form the feature matrix matrix \(\mathbf{X} \in \mathbb{R}^{n \times d}\), where $n$ is the number of nodes and \(d\) is the feature dimension. The topological structure of \(\mathcal{G}\) is specified  by its adjacency matrix \(\mathbf{A} \in \{0,1\}^{n \times n}\), where the entry \(a_{ij}\) is defined as $a_{ij} = 1$ if $e_{ij} \in \mathcal{E}$ or $a_{ij} = 0$ otherwise.

\textbf{Graph Neural Networks (GNNs).} Graph pre-training involves training a universal encoder that can extract generalizable knowledge from diverse graphs that can be rapidly adapted and transferred to a variety of downstream tasks.
In most graph pre-training methods, GNNs \cite{GNNsurvey1, GNNsurvey2} are used as the backbone encoder, which encode node representations through the message passing and aggregation framework:
\begin{equation}
\label{eq:gnn_encoder}
\mathbf{h}_i^{(l+1)} = \text{UPDATE}^{(l)}\left( \mathbf{h}_i^{(l)}, \text{AGGREGATE}^{(l)}\left( \{ \mathbf{h}_j^{(l)} : j \in \mathcal{N}(i) \} \right) \right),
\end{equation}
where $\mathbf{h}_i^{(l)}$ denotes the $l$-layer representation of node $v_i$, $\mathcal{N}(i)$ denotes the set of neighbors of $v_i$, $\text{AGGREGATE}^{(l)}$ and $\text{UPDATE}^{(l)}$ are layer-specific functions, and the primary differences among various GNNs lie in how these two functions are designed.
In our work, we consider the Graph Convolutional Network (GCN) \cite{GCN} as the backbone encoder for graph pre-training, which defines the aggregation as a normalized sum over neighbors:
\begin{equation}
\label{eq:gcn_encoder}
\mathbf{H}^{(l+1)} = \sigma\left( \tilde{\mathbf{D}}^{-\frac{1}{2}} \tilde{\mathbf{A}} \tilde{\mathbf{D}}^{-\frac{1}{2}} \mathbf{H}^{(l)} \mathbf{W}^{(l)} \right),
\end{equation}
where $\tilde{\mathbf{A}} = \mathbf{A} + \mathbf{I}$ is the adjacency matrix with self-loops, and $\tilde{\mathbf{D}}$ is the corresponding diagonal degree matrix, $\mathbf{W}^l$ is a learnable weight matrix and $\sigma$ is a non-linear activation function.

\textbf{Multi-domain Graph Pre-training.} 
Training the universal GNN encoder mentioned above requires pre-training on multi-domain graphs to learn generalizable knowledge.
And we define the collection of multi-domain graphs as $\mathcal{D}_{\mathcal{G}} = \{\mathcal{G}^{(1)}, \mathcal{G}^{(2)}, \dots, \mathcal{G}^{(M)}\}$, where each $\mathcal{G}^{(k)} = (\mathcal{V}^{(k)}, \mathcal{E}^{(k)})$ denotes the $k$-th graph instance in the set, and $M$ is the number of graphs. These graphs often exhibit distinct characteristics, particularly in node feature dimensions and semantics.
This poses challenges for training a universal GNN encoder due to the fixed input requirement of $\mathbf{W}^l$. To address this, some methods apply data pre-processing techniques such as singular value decomposition to align features:
\begin{equation}
\label{eq:SVD}
\mathbf{X}^i \approx \mathbf{U}^i_k \mathbf{\Sigma}^i_k \mathbf{V}_k^{i^\top}, \mathbf{X}^i_{\text{align}}=\mathbf{X}^i\mathbf{V}^i_k,
\end{equation}
where $\mathbf{U}^i_k \in \mathbb{R}^{n \times k}$, $\mathbf{\Sigma}^i_k \in \mathbb{R}^{k \times k}$, and $\mathbf{V}^{i}_k \in \mathbb{R}^{d \times k}$ are the top-$k$ singular vectors and values. The reduced representation $\mathbf{X}^i_{\text{align}}$ is then used as the aligned node features.

\section{Methodology}
\label{sec:methodology}

In this section, we introduce our \textbf{L}atent s\textbf{E}mantic \textbf{D}istribution \textbf{A}lignment (LEDA) approach for multi-domain graph pre-training, as illustrated in Figure \ref{fig:overview}. 
The goal of LEDA is to learn generalizable distribution from multi-domain graphs, enabling the encoder directly applied to various downstream tasks.
We first design a \textbf{D}omain \textbf{P}rojection \textbf{U}nit (DPU) to provide a semantically aligned basis that maps multi-domain features into a unified embedding space while preserving their information through mutual information maximization(in Section \ref{sec:DPU}). 
We then conduct a detailed analysis of the limitation of existing graph pre-training methods in multi-domain scenarios (in Section \ref{sec:analysis_of_ugp}).
Built upon this, we introduce a \textbf{L}atent \textbf{D}istribution \textbf{A}lignment module (LDA), which learns a shared latent distribution across domains by aligning their posterior distribution to a common prior (in Section \ref{sec:LDA}). 
In addition, a detailed discussion of the assumptions used in this section is provided in Appendix \ref{app_supplement_dis}.

\begin{figure*}[t]
\centering
\includegraphics[width=1\textwidth]{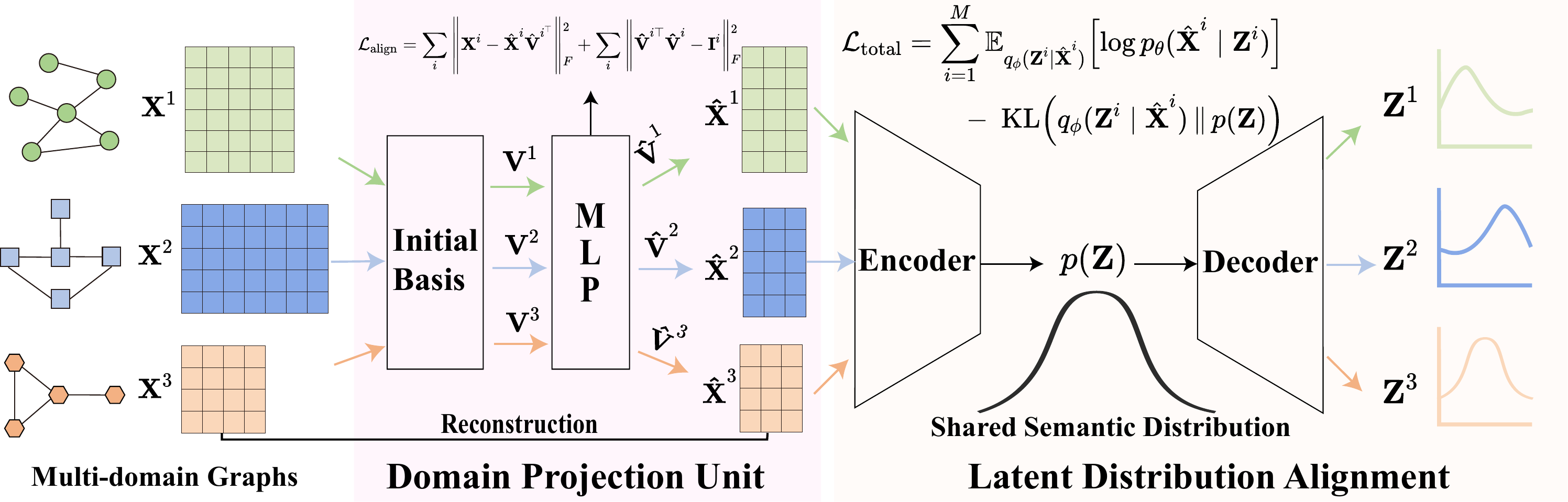}
\caption{Overview of the proposed LEDA. Given the train dataset $ \mathcal{T}_\mathcal{G}=\{\mathcal{G}^1, \mathcal{G}^2, \dots, \mathcal{G}^t\}$, we first get their initial projection basis set $\mathcal{P}_\mathcal{G}=\{\textbf{V}^1,\textbf{V}^2, \dots, \textbf{V}^t\}$ by SVD. Subsequently, this set of projection basis is processed by a learnable multi-layer perceptron, which is optimized by $\mathcal{L}_{\text{align}} = \mathcal{L}_{\text{recon}} + \lambda \cdot \mathcal{L}_{\text{ortho}}$. 
Furthermore, we encode the unified feature processed by DPU through a single-layer GCN and align the posterior distribution with a shared latent distribution. Finally, we jointly optimize the parameters of DPU and LDA using the loss function $\mathcal{L}_{\text{total}}$.
}
\label{fig:overview}
\end{figure*}

\subsection{Domain Projection Unit}
\label{sec:DPU}

A key challenge in multi-domain graph pre-training is the semantic heterogeneity of node features: domains may contain different feature dimensions and semantics, making direct comparison or joint learning ineffective. 
While dimensionality reduction can unify feature dimensions, it ignores two critical requirements in multi-domain settings: 
(i) enabling cross-domain semantic alignment while retaining essential semantics, and (ii) avoiding domain semantic conflicts. More discussion about domain semantic conflicts is in Appendix \ref{app_supplement_dis}.
To address these challenges, we propose a novel \textbf{D}omain \textbf{P}rojection \textbf{U}nit (DPU), a learnable projection module that maps multi-domain features into a shared semantic subspace while satisfying both requirements through $\mathcal{L}_\text{align}$.

Specifically, for each graph $\mathcal{G}^i = (\mathbf{X}^i, \mathbf{A}^i)$, we first compute an initial projection basis $\mathbf{V}^i \in \mathbb{R}^{d_i \times k}$ via SVD to capture the dominant variance in $\mathbf{X}^i$. 
This provides a stable, low-rank initialization that avoids the instability of random projection in high dimensions.
We then refine $\mathbf{V}^i$ using a domain-shared learnable transformation function $\mathrm{Trans}(\cdot)$, implemented as a two-layer MLP:
\begin{equation}
\label{eq:dpu_encode}
    \mathbf{\hat{V}}^i = \mathrm{Trans}(\mathbf{V}^i) =  \mathrm{ReLU}(\mathbf{V}^i \mathbf{W}_1  + \mathbf{b}_1) \cdot \mathbf{W}_2  + \mathbf{b}_2 ,
\end{equation}
where the same parameters $\{\mathbf{W}_1, \mathbf{W}_2, \mathbf{b}_1, \mathbf{b}_2\}$ are applied to all domains. 
This shared-parameter forces all domains to adapt their basis into a unified semantic space, laying the foundation for cross-domain compatibility.
The aligned node representations are obtained as $\mathbf{\hat{X}}^i = \mathbf{X}^i \mathbf{\hat{V}}^i$, where $\mathbf{\hat{X}}^i \in \mathbb{R}^{n_i \times m}$.
To ensure that this projection retains essential semantics, we introduce a reconstruction loss that encourages $\mathbf{\hat{X}}^i$ to be sufficient for recovering $\mathbf{X}^i$:
\begin{equation}
\label{eq:recon}
    \mathcal{L}_{\text{recon}} = \sum_i \left\| \mathbf{X}^i - \hat{\mathbf{X}}^i \hat{\mathbf{V}}^{i^\top} \right\|_F^2 = \sum_i \left\| \mathbf{X}^i - \mathbf{X}^i \mathbf{\hat{V}}^i\hat{\mathbf{V}}^{i^\top} \right\|_F^2,
\end{equation}
where $\left\| \cdot \right\|_F $ denotes the Frobenius norm. 

However, reconstruction and shared parameters alone are insufficient to resolve the semantic conflicts in the unified embedding space.
To address this, we enhance the discriminability of the aligned representations by maximizing the mutual information between the original features $\mathbf{X}^i$ of each domain and their aligned representations $\hat{\mathbf{X}}^i$ as follows:
\begin{equation}
\label{eq:information_bottle}
    I(\mathbf{X}^i; \mathbf{\hat{X}}^i) = H(\mathbf{\hat{X}}^i) - H(\mathbf{\hat{X}}^i \mid \mathbf{X}^i),
\end{equation}
where $H(\mathbf{\hat{X}}^i)$ denotes the entropy of the projected features, and $H(\mathbf{\hat{X}}^i \mid \mathbf{X}^i)$ represents the conditional entropy given the original features. Given that the projected representation $\hat{\mathbf{X}}^i$ is deterministically computed from the original features $\mathbf{X}^i$ (i.e., $\hat{\mathbf{X}}^i = \mathbf{X}^i \cdot f(\mathbf{V}^i)$), where $f(\cdot)$ is a deterministic function and $\mathbf{V}^i$ is derived from $\mathbf{X}^i$, the conditional entropy $H(\hat{\mathbf{X}}^i \mid \mathbf{X}^i)$ becomes zero. 
Therefore, to maximize the mutual information, we maximize the entropy of the projected representation $\hat{\mathbf{X}}^i = \mathbf{X}^i \cdot f(\mathbf{V}^i)$.
Since $\mathbf{X}^i$ is deterministic for each domain $\mathcal{G}^i$, this is achieved by encouraging higher entropy in $\hat{\mathbf{V}}^i$. Assuming $\mathbf{\hat{V}}^i$ follows an approximately multivariate Gaussian distribution, the entropy admits the following expression:
\begin{equation}
\label{eq:differential entropy}
    h(\mathbf{\hat{V}}^i) = \frac{1}{2} \log \left( (2\pi e)^m \cdot \det(\boldsymbol{\Sigma}_{\hat{V}^i}) \right),
\end{equation}
where $\boldsymbol{\Sigma}_{\hat{V}^i}$ is the covariance matrix and $\det(\cdot)$ denotes the determinant of the covariance matrix.
According to the \textit{Hadamard inequality}, the determinant of $\boldsymbol{\Sigma}_{\hat{V}^i}$ is maximized when the columns of $\mathbf{\hat{V}}^i$ are orthogonal. 
Therefore, enforcing orthogonality among projection directions increases $h(\mathbf{\hat{V}}^i)$, thus indirectly improving the mutual information $I(\mathbf{X}^i; \mathbf{\hat{X}}^i)$ and enhancing the semantic quality of the representations. 
We encode this constraint using the following regularization term:
\begin{equation}
\label{eq:loss_ortho}
    \mathcal{L}_{\text{ortho}} = \sum_i \left\| \mathbf{\hat{V}}^{i\top} \mathbf{\hat{V}}^i - \mathbf{I}^i \right\|_F^2,
\end{equation}
and the final alignment loss is defined as:
\begin{equation}
\label{eq:align}
    \mathcal{L}_{\text{align}} = \mathcal{L}_{\text{recon}} + \lambda \cdot \mathcal{L}_{\text{ortho}},
\end{equation}
where $\lambda$ is a hyperparameter that balances information preservation and structural regularization. 
This joint objective ensures that DPU learns a unified embedding space where features can support cross-domain semantic consistency. 

\subsection{Analysis of graph pre-training}
\label{sec:analysis_of_ugp}
Existing graph pre-training methods, such as contrastive learning and link prediction \cite{DGI,GRACE,BGRL,GraphMAE}, are primarily designed for single-domain settings. 
When applied to multi-domain graphs, they often fail to capture shared semantics across domains due to fundamental misalignments in their optimization objectives.

\textbf{Limitations of Contrastive Learning.} In contrastive learning frameworks (e.g., InfoNCE \cite{InfoNCE}), the goal is to maximize the similarity between positive node pairs (same domain) while minimizing it for negative pairs (different domains). This is formalized as:
\begin{equation}
\label{eq:contrastive_learning}
\begin{aligned}
 & \mathcal{L}_{\text{InfoNCE}} = - \frac{1}{N} \sum_{i=1}^{N} \log  ( \\ &\frac{\sum_{(v_i,v_j)\in \text{Pos}} \exp(\text{sim}(\mathbf{z}_i, \mathbf{z}_j^+)/\tau)}{\sum_{(v_i,v_j)\in \text{Pos}} \exp(\text{sim}(\mathbf{z}_i, \mathbf{z}_j^+)/\tau) + \sum_{(v_i,v_j)\in \text{Neg}} \exp(\text{sim}(\mathbf{z}_i, \mathbf{z}_j^-)/\tau)} ),
\end{aligned}
\end{equation}
where $N$ is the total number of nodes across all domains, $\mathbf{z} = \text{GNN}(\mathbf{A}, \mathbf{X})$, $(v_i,v_j) \in \text{Pos}$ when $v_i$ and $v_j$ come from the same domain, $(v_i,v_j) \in \text{Neg}$ when $v_i$ and $v_j$ come from different domains, $\text{sim}(\cdot)$ is the cosine similarity function, and $\tau$ is the temperature coefficient.
While effective within a single domain, this objective becomes problematic in multi-domain settings. Inspired by \cite{MINE}, we analyze the mutual information across domains to quantify cross-domain alignment.
\begin{definition}
(Mutual Information across Domains)
\label{defintion}
Let $\mathcal{D}_{\mathcal{G}^i}$ and $\mathcal{D}_{\mathcal{G}^j}$ denote the data distributions from two different domains $\mathcal{G}^i$ and $\mathcal{G}^j$. The mutual information across domains is defined as:
\begin{equation}
I(\mathcal{D}_{\mathcal{G}^i}; \mathcal{D}_{\mathcal{G}^j}) = \mathbb{E}_{x \sim \mathcal{D}_{\mathcal{G}^i} , x' \sim \mathcal{D}_{\mathcal{G}^j}} \left[ \log \frac{p(x, x')}{p(x) p(x')} \right],
\end{equation}
where $p(x, x')$ is the joint distribution over samples from different domains, and $p(x),p(x')$ are marginal distributions.
\end{definition}

In contrastive learning, the joint distribution $p(x, x')$ is typically modeled via a similarity-based scoring function $s_{ij}=f(x,x')$, normalized by a constant $Z:p(x, x')=\frac{e^{s_{ij}}}{Z}$. And the marginal distribution is modeld by $p(x)p(x') = C \cdot e^{\xi(x, x')}$, where $C$ is a constant and $e^{\xi(x, x')}$ is a bias term reflecting the difference between the assumption $ p(x) p(x') = C$ and the true distribution.
Based on this assumption, we give the following proposition:
\begin{proposition}
\label{prop:mi_contrastive}
Let the joint distribution over samples from two domains $\mathcal{D}_{\mathcal{G}^i}$ and $\mathcal{D}_{\mathcal{G}^j}$ be modeled as $p(x, x') = e^{s_{ij}} / Z$, where $s_{ij} = f(x, x')$ is a similarity score and $Z = \sum_{x, x'} e^{s_{ij}}$ (or $\int e^{s_{ij}} dx dx'$ in continuous case) is the normalization constant. Further assume the product of marginals satisfies $p(x)p(x') = e^{\xi(x, x')}$. Then the mutual information between the two domains satisfies:
\begin{equation}
    I(\mathcal{D}_{\mathcal{G}^i}; \mathcal{D}_{\mathcal{G}^j}) = \mathbb{E}[s_{ij}] - \log Z - \Delta,
\end{equation}
where $\Delta = \mathbb{E}[\xi(x, x')]$.
\end{proposition}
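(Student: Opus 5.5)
The identity follows by substituting the two modelling assumptions of the proposition into Definition~\ref{defintion} and using linearity of expectation. Throughout, the expectation $\mathbb{E}[\cdot]$ is taken over the same pairs $(x,x')$ that appear in Definition~\ref{defintion}, i.e. $x \sim \mathcal{D}_{\mathcal{G}^i}$ and $x' \sim \mathcal{D}_{\mathcal{G}^j}$. Any reweighting convention used for this expectation must be applied identically to every term below, so that it does not affect the identity.

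\textbf{Step 1: expand the log-ratio.} For every fixed pair $(x,x')$, split the logarithm as
\begin{equation*}
\log \frac{p(x,x')}{p(x)p(x')} = \log p(x,x') - \log\bigl(p(x)p(x')\bigr).
\end{equation*}

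\textbf{Step 2: insert the assumed forms.} Using $p(x,x') = e^{s_{ij}}/Z$, the first term becomes $\log p(x,x') = s_{ij} - \log Z$. Using $p(x)p(x') = e^{\xi(x,x')}$, the second term becomes $\log\bigl(p(x)p(x')\bigr) = \xi(x,x')$. This gives the pointwise identity
\begin{equation*}
\log \frac{p(x,x')}{p(x)p(x')} = s_{ij} - \log Z - \xi(x,x').
\end{equation*}

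\textbf{Step 3: take expectations.} Take expectations of both sides and apply linearity. The normalizer $Z$ is the sum (or integral) of $e^{s_{ij}}$ over all pairs, so it does not depend on the particular pair $(x,x')$, and hence $\mathbb{E}[\log Z] = \log Z$. The result is
\begin{equation*}
I(\mathcal{D}_{\mathcal{G}^i};\mathcal{D}_{\mathcal{G}^j}) = \mathbb{E}[s_{ij}] - \log Z - \mathbb{E}[\xi(x,x')].
\end{equation*}
With $\Delta = \mathbb{E}[\xi(x,x')]$ this is exactly the claimed identity.

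\textbf{Main obstacle: integrability.} The expectations must exist so that linearity is justified. I would simply assume that $s_{ij}$ and $\xi$ are integrable under the sampling distribution; this is automatic in the discrete, finite-sample setting.

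\textbf{Consistency with the earlier constant $C$.} The text before the proposition writes the product of marginals as $C\cdot e^{\xi}$, whereas the proposition uses $e^{\xi}$. I would note that the constant $\log C$ can be absorbed into $\xi$, and therefore into $\Delta$, so the two formulations agree.
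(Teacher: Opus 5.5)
Your proposal is correct and matches the paper's proof: substitute $p(x,x')=e^{s_{ij}}/Z$ and $p(x)p(x')=e^{\xi(x,x')}$ into the definition, simplify the logarithm pointwise, and apply linearity of expectation with $Z$ constant to get $\mathbb{E}[s_{ij}]-\log Z-\Delta$. Your remarks on integrability and on absorbing the constant $C$ into $\xi$ are sound details that the paper leaves implicit.
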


The proof is in Appendix. Proposition~\ref{prop:mi_contrastive} reveals a limitation of contrastive learning in multi-domain settings: the cross-domain mutual information is directly governed by the expected similarity score $\mathbb{E}[s_{ij}]$. 
In practice, contrastive objectives explicitly minimize $s_{ij}$ for cross-domain (negative) pairs to enforce domain separation.
As a result, $\mathbb{E}[s_{ij}]$ decreases, leading to a reduction in $I(\mathcal{D}_{\mathcal{G}^i}; \mathcal{D}_{\mathcal{G}^j})$.
While this enhances intra-domain discrimination, it actively suppresses the learning of shared semantics across domains.

\textbf{Limitations of Link Prediction.} Link prediction-based pre-training optimizes a local reconstruction objective: it encourages the model to predict observed edges by maximizing the similarity of connected node pairs. While effective for capturing domain-specific structural patterns, this approach is inherently limited in multi-domain settings for two key reasons. First, it overfits to local topology, ignoring higher-order semantic relationships that are often shared across domains (e.g., functional roles or community memberships). 
Second, and more critically, it lacks any explicit mechanism for cross-domain alignment: each domain is trained in isolation, so semantically equivalent nodes (e.g., “influential users” in social networks and “highly cited papers” in citation graphs) may be mapped to distant regions in the embedding space.
Consequently, link prediction learns representations that are highly specialized to single domain structures but fail to generalize to unseen domains or tasks requiring cross-domain knowledge.

\subsection{Latent Distribution Alignment}
\label{sec:LDA}
\textbf{Motivation of latent distribution alignment.} Existing graph pre-training methods often fail to explicitly capture shared semantics across domains, as they either focus on structural similarity or rely on domain-specific objectives. To address this, we propose to learn a shared latent semantic distribution $p(\mathbf{Z})$ from multi-domain graphs, under the assumption that all observed graphs $\{\mathcal{G}^1, \dots, \mathcal{G}^g\}$ are generated from this common prior \cite{MDP-GNN}, with domain-specific variations captured by conditional decoders $p_\theta(\hat{\mathbf{X}}^i \mid \mathbf{Z}^i)$.
Based on this assumption, our Latent Distribution Alignment (LDA) module aligns the posterior distribution $q_\phi(\mathbf{Z}^i \mid \hat{\mathbf{X}}^i)$, which is encoded by a shared GCN encoder, to the shared prior $p(\mathbf{Z})$ by minimizing the KL divergence.
Because the encoder is shared and the inputs have already been aligned by the DPU, this process enforces that semantically similar samples from distinct domains share a common latent representation. 
As a result, LDA promotes domain-invariant semantics.
In contrast, contrastive learning tends to arbitrarily push cross-domain pairs apart by treating them as negatives, and link prediction focuses mainly on local structural patterns rather than global semantic alignment.

\textbf{Instantiation of LDA.} Guided by the above analysis, we instantiated a latent distribution alignment module to learn generalizable knowledge from multi-domain graphs. 
Specifically, we encode each graph $\mathcal{G}^i = (\hat{\mathbf{X}}^i, \mathbf{A}^i)$ using a single-layer GCN encoder with parameters shared across domains:
\begin{equation}
    \mathbf{Z}^i = \mathrm{GCN}_{\text{base}}(\hat{\mathbf{X}}^i, \mathbf{A}^i) = \mathrm{ReLU}(\tilde{\mathbf{D}}^{-1/2} \tilde{\mathbf{A}} \tilde{\mathbf{D}}^{-1/2} \hat{\mathbf{X}}^i \mathbf{W}),
\end{equation}
where $\tilde{\mathbf{A}}^i = \mathbf{A}^i + \mathbf{I}^i$ denotes the adjacency matrix with self-loops, and $\tilde{\mathbf{D}}$ is the corresponding degree matrix. 
\begin{table*}[h]
\centering
\caption{
Accuracy (\%) of cross-domain node classification with standard deviations. Each column represents a test domain, while others are train domains. \textbf{CD} means whether the model is trained on multi-domain datasets and tested on a cross-domain dataset. Aside from GCN, the best results are bolded and the second-best are underlined. Methods with * are reported from \cite{FUG}.}
\resizebox{1.0\textwidth}{!}{
\begin{tabular}{lc|cccccccc}
\toprule
\textbf{Method}   &\textbf{CD} & \textbf{Cora} & \textbf{CiteSeer} & \textbf{PubMed}  & \textbf{Photo} & \textbf{Computers} & \textbf{CS} & \textbf{Physics} \\
\midrule
Raw features* 	& $\times$ & 57.90 $\pm$ 3.90 & 57.60 $\pm$ 2.85 & 79.55 $\pm$ 0.98 & 80.99 $\pm$ 1.65 & 75.59 $\pm$ 1.69 & 89.92 $\pm$ 0.95  & 93.18 $\pm$ 0.45 \\
SVD* & $\times$ & 56.36 $\pm$ 4.14 & 48.29 $\pm$ 3.18 & 82.80 $\pm$ 0.91 & 74.92 $\pm$ 1.82 & 71.26 $\pm$ 1.34  & 87.12 $\pm$ 0.73 & 92.48 $\pm$ 0.47 \\
DeepWalk* & $\times$ & 75.70 $\pm$ 0.00 & 50.50 $\pm$ 0.00 & 80.50 $\pm$ 0.00  & 89.44 $\pm$ 0.00 & 85.68 $\pm$ 0.00 & 84.61 $\pm$ 0.00 & 91.77 $\pm$ 0.00 \\
\midrule
GRACE* & $\times$ & 83.20 $\pm$ 1.87 & 70.99 $\pm$ 2.29 & 85.46 $\pm$ 0.54 & 91.93 $\pm$ 0.83 & 85.36 $\pm$ 0.82  & 91.84 $\pm$ 0.37 & OOM \\
GCA* & $\times$ & 82.83 $\pm$ 2.29 & 72.06 $\pm$ 1.91 & \underline{85.69 $\pm$ 0.68}  & 92.63 $\pm$ 1.12 & 87.78 $\pm$ 0.78 & 92.69 $\pm$ 0.49 & OOM \\
DGI* & $\times$ & 83.24 $\pm$ 2.12 & 71.23 $\pm$ 2.37 & 84.62 $\pm$ 0.83  & 92.32 $\pm$ 0.49 & 86.12 $\pm$ 0.73 & 92.47 $\pm$ 0.60 & 94.47 $\pm$ 0.50 \\
BGRL* & $\times$ & 81.57 $\pm$ 2.07 & 70.10 $\pm$ 2.04 & 83.67 $\pm$ 0.84  & 92.34 $\pm$ 0.73 & 86.51 $\pm$ 1.53 & 92.12 $\pm$ 0.63 & 95.42 $\pm$ 0.41 \\
GraphMAE & $\times$ &83.86 $\pm$ 0.62 & 72.26 $\pm$ 0.44 & 82.99 $\pm$ 0.21  & 92.21 $\pm$ 0.24 &87.48 $\pm$ 0.28 &92.28 $\pm$ 0.33 & 95.33 $\pm$ 0.21 \\
FUG* & $\times$ & \underline{84.45 $\pm$ 2.45} & \underline{72.43 $\pm$ 2.92} & 85.47 $\pm$ 1.13  & \underline{93.07 $\pm$ 0.82} &88.42 $\pm$ 0.98 & \underline{92.89 $\pm$ 0.45} & \underline{95.45 $\pm$ 0.27}\\ 
\midrule
OFA* & \checkmark & 75.90 $\pm$ 1.26 &- & 78.25 $\pm$ 0.71 &- &- &- &- \\
GraphControl* &\checkmark &- &- &-  &89.66 $\pm$ 0.56 &- &- &94.31 $\pm$ 0.12 \\
FUG & \checkmark & 83.58 $\pm$ 1.21 & 69.65 $\pm$ 1.74 &84.80 $\pm$ 0.30  &92.51 $\pm$ 0.20 &\textbf{89.29 $\pm$ 0.05} &92.45 $\pm$ 0.14 & 95.37 $\pm$ 0.33\\ 
\textbf{LEDA (Ours)}& \checkmark & \textbf{84.71 $\pm$ 0.73} & \textbf{73.40 $\pm$ 0.15} & \textbf{87.10 $\pm$ 0.27}  & \textbf{93.24 $\pm$ 0.98} & \underline{88.77 $\pm$ 0.56} & \textbf{93.55 $\pm$ 0.42} & \textbf{95.70 $\pm$ 0.29} \\
\midrule
Supervised GCN  & $\times$  & 82.80 $\pm$ 0.00 & 72.00 $\pm$ 0.00 & 84.80 $\pm$ 0.00  & 92.42 $\pm$ 0.00 & 86.51 $\pm$ 1.00 & 93.03 $\pm$ 0.00 & 95.65 $\pm$ 0.00 \\
\bottomrule
\end{tabular}
}

\label{tab:multi_domain_10test}
\end{table*}
We treat $\mathbf{Z}^i$ as the semantic basis for approximating the domain-specific variational posterior $q_\phi(\mathbf{Z}^i \mid \hat{\mathbf{X}}^i)$. Following the variational autoencoder framework \cite{GVAE}, we use two parallel GCNs (also shared across domains) to estimate the mean and log-variance:
\begin{equation}
    \boldsymbol{\mu}^i = \mathrm{GCN}_\mu(\mathbf{Z}^i, \mathbf{A}^i), \quad \log \boldsymbol{\sigma}^i = \mathrm{GCN}_\sigma(\mathbf{Z}^i, \mathbf{A}^i),
\end{equation}
where $\boldsymbol{\mu}^i$, $\boldsymbol{\sigma}^i$ represent the mean and variance of the domain-specific posterior distribution. 
These define a Gaussian posterior for each domain $\mathcal{G}^i$, formulated as $q_\phi(\mathbf{Z}^i \mid \hat{\mathbf{X}}^i) = \mathcal{N}(\mu^i, (\sigma^i)^2)$.
To enable gradient backpropagation, we follow the previous work \cite{GVAE} and employ the reparameterization trick to sample latent codes from the variational distribution:
\begin{equation}
    \mathbf{Z}^i = \boldsymbol{\mu}^i +\boldsymbol{\sigma} ^i \odot \boldsymbol{\epsilon}, \quad \boldsymbol{\epsilon} \sim \mathcal{N}(0, \mathbf{I}),
\end{equation}
where $\boldsymbol{\epsilon} \sim \mathcal{N}(0, \mathbf{I})$ is standard Gaussian noise, and $\odot$ denotes element-wise multiplication.
The latent codes are then decoded by a shared GCN decoder to reconstruct the DPU-aligned features:
\begin{equation}
    \hat{\mathbf{X}}_{\text{rec}}^i = \mathrm{Decoder}_\theta(\mathbf{Z}^i, \mathbf{A}^i),
\end{equation}
where $\mathrm{Decoder}_\theta(\cdot)$ is a shared GCN-based decoder parameterized by $\theta$.
To learn a shared semantic distribution across domains, we optimize a multi-domain variational objective that jointly trains all graphs under a common generative prior. The objective is given by:
\begin{equation}
\label{eq:ELBO}
\mathcal{L}_{\text{total}} =
\sum_{i=1}^{M} \underbrace{ \mathbb{E}_{q_\phi(\mathbf{Z}^i \mid \hat{\mathbf{X}}^i)} \left[ \log p_\theta(\hat{\mathbf{X}}^i \mid \mathbf{Z}^i) \right] }_{\text{Domain-specific reconstruction}} 
\ -\ 
\underbrace{ \mathrm{KL} \left( q_\phi(\mathbf{Z}^i \mid \hat{\mathbf{X}}^i) \,\|\, p(\mathbf{Z}) \right) }_{\text{Cross-domain alignment}},
\end{equation}
where $p(\mathbf{Z})$ is a shared prior that acts as a universal reference for all domains. 
The gradient computed from $\mathcal{L}_\text{total}$ is simultaneously propagated to both the GCN encoders and the transformation function in DPU.
Although the mathematical form resembles the standard ELBO, its role in LEDA is fundamentally different. 
In VGAE \cite{GVAE}, the KL term regularizes the latent space for reconstruction. 
In contrast, our KL term serves as a cross-domain alignment regularizer: by minimizing $\mathrm{KL}(q_\phi(\mathbf{Z}^i \mid \hat{\mathbf{X}}^i) \,\|\, p(\mathbf{Z}))$ for all domain $i$, we force each domain to adapt its posterior distribution to the same target. 
Combined with the shared encoder and DPU-aligned inputs, this mechanism enables the model to learn generalizable semantic representations across domains.

\section{Experiments}
\label{eq:sec_exp}

In this section, we comprehensively evaluate the effectiveness and generalizability of the proposed LEDA by conducting comparisons with multiple baseline methods on both node-level and graph-level tasks. In Section \ref{sec:Experimental Settings}, we introduce the experimental setup (including datasets, baselines and setups), while in Section \ref{sec:performance_analysis}, we analyze the model's performance across various tasks. Furthermore, in Section \ref{sec:model_analysis}, we present ablation studies to examine the effectiveness of each component of LEDA.
We also provide more detailed experimental results in the appendix.
\subsection{Experimental Settings}
\label{sec:Experimental Settings}

\textbf{Datsets.}
To effectively validate the performance of  LEDA, we conduct experiments on eleven widely used benchmark datasets. 
We consider graph datasets spanning various domains, including: 1) \textit{citation networks}: Cora, CiteSeer, and PubMed \cite{citationnetwork, Cora}; 2) \textit{co-purchase networks}: Photo and Computers \cite{PhotoComputers}; 3)\textit{co-author networks}: CS and Physics \cite{CSPhysics}; 4) \textit{social network}: COLLAB, IMDB-BINARY \cite{socialnetworks}; 5) \textit{biological network}: PROTEINS \cite{PROTEINS}, ENZYMES \cite{ENZYMES}. 

\textbf{Baselines.}
We compare LEDA's performance in four main categories as outlined below: 1) \textit{Non-parametric models}: Raw features, SVD \cite{SVD} and DeepWalk \cite{deepwalk}; 2) \textit{Graph self-supervised models}: GRACE \cite{GRACE}, GCA \cite{GCA}, BGRL \cite{BGRL}, DGI \cite{DGI}, GraphMAE \cite{GraphMAE}; 3) \textit{Graph pre-train models}: 
OFA \cite{OFA}, GraphControl \cite{graphcontrol}, GPPT \cite{GPPT}, GraphPrompt \cite{GraphPrompt}, GPF \cite{GPF}, GCOPE \cite{GCOPE}, MDGPT \cite{MDGPT}, FUG \cite{FUG}, All-in-one \cite{All-in-One}, GPF-Plus \cite{GPF}, ULTRA(3g) \cite{ultra}, SCORE \cite{SCORE}; 4) \textit{Graph semi-supervised models}: GCN \cite{GCN}.

\textbf{Setups.} To evaluate the performance of LEDA, we conduct extensive experiments under various settings as following: 1) \textit{Cross-domain node classification}; 2) \textit{Cross-domain few-shot node classification}; 3) \textit{Cross-domain zero-shot graph classification}. More implementation details can be found in Appendix \ref{app_implementation_detail}.

\begin{table*}[h]
\centering

\caption{Performance on cross-domain 1-shot node classification. Methods with * are reported from \cite{MDGPT}.  Each column represents a test domain, while others are train domains. The best results are bolded and the second-best are underlined.}
\resizebox{1.0\textwidth}{!}{
\begin{tabular}{l|ccccccc}
\toprule
\textbf{Method} \textbackslash \textbf{Test data}  & \textbf{Cora} & \textbf{CiteSeer} & \textbf{PubMed} & \textbf{Photo} & \textbf{Computers} & \textbf{CS} \\
\midrule
DGI  & 24.19 $\pm$ 5.43 & 27.66 $\pm$ 4.93 & 45.77 $\pm$ 7.16  & 44.92 $\pm$ 7.71  & 27.92 $\pm$ 5.35 & 67.00 $\pm$ 6.72 \\
BGRL  & 35.29 $\pm$ 6.54 & 34.99 $\pm$ 6.92 & 43.56 $\pm$ 7.54  & 48.25 $\pm$ 6.30 & 35.84 $\pm$ 8.74 & 62.55 $\pm$ 4.96 \\
GraphMAE &35.99 $\pm$ 7.81 &38.35 $\pm$ 8.36 &49.23 $\pm$ 8.36 & 55.99 $\pm$ 9.62 & 45.17 $\pm$ 10.91 & 66.64 $\pm$ 7.03\\ 
\midrule
GPPT*  & 15.37 $\pm$ 4.51 & 23.24 $\pm$ 2.94 & 36.56 $\pm$ 5.31  & 16.19 $\pm$ 4.73 & 19.22 $\pm$ 8.71  &- \\
GraphPrompt* & 35.90 $\pm$ 7.10 & 32.76 $\pm$ 7.66 & 43.34 $\pm$ 10.66  & 49.88 $\pm$ 8.31 & 43.03 $\pm$ 10.35 &- \\
GPF*  & 37.84 $\pm$ 11.07 & 37.61 $\pm$ 8.87 & 46.36 $\pm$ 7.48  & 49.42 $\pm$ 7.04 & 37.00 $\pm$ 6.52 &- \\
GCOPE*  & 33.38 $\pm$ 6.86 & 35.56 $\pm$ 6.81 & 42.10 $\pm$ 8.07  & 48.52 $\pm$ 7.78 & 40.22 $\pm$ 7.82 &- \\
MDGPT*  & 42.26 $\pm$ 10.18 & \underline{42.40 $\pm$ 9.26} & 49.82 $\pm$ 8.38  & \textbf{64.82 $\pm$ 10.53} & \underline{49.77 $\pm$ 11.00} &- \\
\midrule
FUG  & \underline{42.92 $\pm$ 11.51} &36.78 $\pm$ 7.46 & \underline{53.83 $\pm$ 8.77} & 62.80 $\pm$ 10.57 & 47.92 $\pm$ 11.51 & \underline{68.32 $\pm$ 6.55}\\
\textbf{LEDA (Ours)} & \textbf{50.70 $\pm$ 10.67} & \textbf{43.83 $\pm$ 10.04} & \textbf{54.34 $\pm$ 11.08}  & \underline{64.35 $\pm$ 9.46} & \textbf{51.00 $\pm$ 12.19} & \textbf{68.74 $\pm$ 8.00} \\
\bottomrule
\end{tabular}
}
\label{tab_one_shot}
\end{table*}

\subsection{Performance Analysis}
\label{sec:performance_analysis}
\textbf{Cross-domain node classification.} To evaluate the effectiveness of LEDA in cross-domain node classification scenarios, we follow the setup of \cite{FUG} and adopt a fixed set of hyper-parameters that are kept consistent across all training datasets. 
It is noteworthy that LEDA outperforms all in-domain baselines in cross-domain testing scenarios, indicating its strong ability to learn broad and transferable knowledge from multi-domain graphs and generalize to unseen domains.
Moreover, while OFA and GraphControl are cross-domain methods capable of training on multi-domain graphs, their performance in cross-domain scenarios remains limited compared to LEDA, which can be attributed to the potential loss of critical information during feature alignment. 
OFA textualizes node attributes and leverages a large language model for encoding, whereas GraphControl employs kernel-based similarity to align node representations via a feature-driven adjacency matrix. 
Such data alignment strategies are sub-optimal when applied to highly complex features. Furthermore, recent advanced graph pre-training method FUG, despite designing a dimension encoder to learn knowledge lossless, still falls short in modeling shared semantics from multi-domain graphs. 
FUG employs a universal contrastive loss to constrain node distributions in representation space, which works well in in-domain  settings but may become suboptimal for multi-domain knowledge learning. This further supports our analysis in Section \ref{sec:analysis_of_ugp}.
As shown in Table \ref{tab:multi_domain_10test}, LEDA outperforms FUG in cross-domain settings across six datasets, with particularly notable improvements on the CiteSeer dataset.
This stems from the avoidance of modeling relative relationships among multi-domain graphs with entangled or inconsistent semantics.
To further validate the transferability of LEDA, we also conducted experiments where pre-training was performed on the citation networks and testing was carried out on the co-purchase networks, and vice versa. The experimental results are presented in Appendix \ref{app_supplementary_results}.

\begin{figure*}[h]

\centering
\subfigure{
\resizebox{0.31\linewidth}{!}{\includegraphics{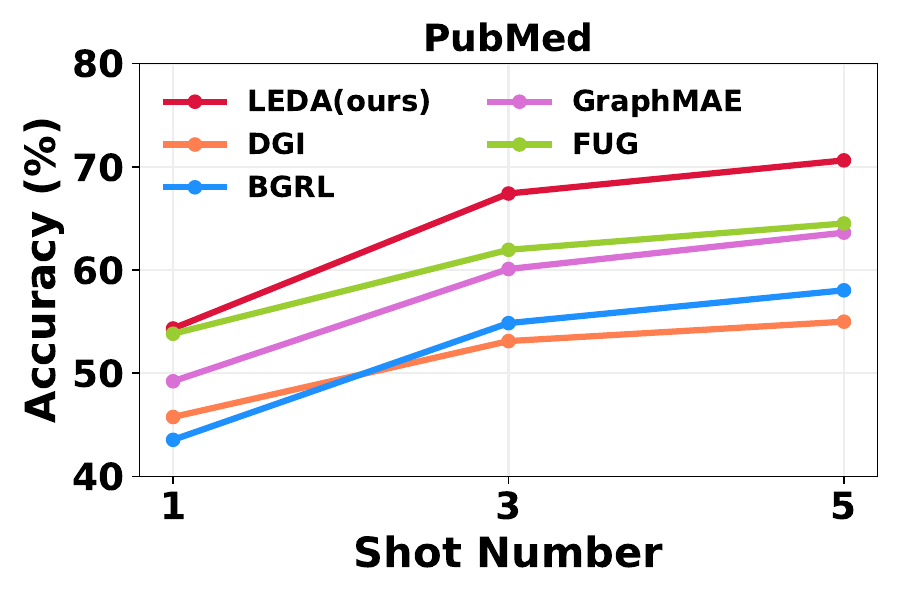}}
}
\subfigure{
\resizebox{0.31\linewidth}{!}{\includegraphics{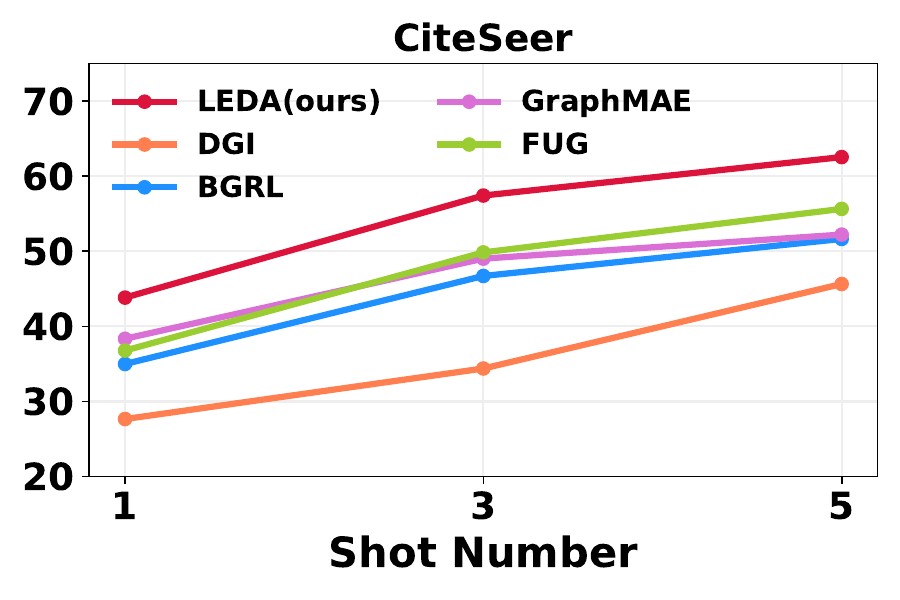}}
}
\subfigure{
\resizebox{0.31\linewidth}{!}{\includegraphics{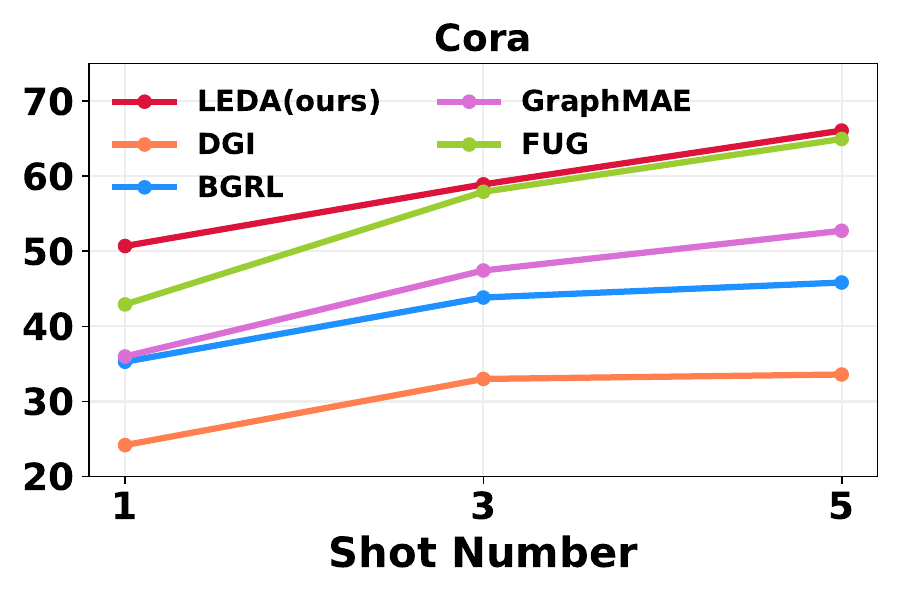}}
}
\caption{Performance on cross-domain few-shot node classification. The red line denotes our method. For traditional in-domain methods, we simply unify the input data dimensions by SVD. }
\label{fig:few_shot}
\end{figure*}

\textbf{Cross-domain few-shot node classification.} In this scenario, we use fixed-parameters to pre-train the model and perform no fine-tuning or prompt-tuning. Specifically, we randomly select $k$ nodes ($k$-shot) from each class and calculate the mean of their vectors  as the class prototype vector and predict the node's class by calculating the similarity between the model output and the prototype vector. 
Unlike the above cross-domain node classification, the 1-shot setting places a stronger demand on the model's generalization ability. 
When $k$ is set to 1,it can be observed from Table \ref{tab_one_shot} that LEDA achieves the best results on five datasets and the second-best results on Photo. Notably, methods with $*$ incorporate prompt-tuning during downstream tasks, which are not currently used in LEDA.
Moreover, FUG, which does not employ any fine-tuning or prompt-tuning strategies, performs well on most datasets; however, it shows suboptimal results on some datasets.
This is due to its reliance on the quality of the sampled data. 
In contrast, LEDA does not require sampling from multi-domain data and achieves excellent performance across all datasets. 
Besides, we also conduct experiments with $k$ set to 3 and 5, as illustrated in Figure \ref{fig:few_shot}. We observe that LEDA achieves the best performance in all datasets, demonstrating its exceptional generalization ability in few-shot scenarios. More experimental results in $k$-shot scenario can be found in Appendix \ref{app_supplementary_results}.

\begin{table*}[h]
\centering

\caption{Performance on zero-shot graph classification. Methods with * are reported from \cite{SCORE}. The best results are bolded and the second-best are underline.}
\resizebox{1.0\textwidth}{!}{
\begin{tabular}{l|cccccccc}
\toprule
\multicolumn{1}{l|}{\multirow{2}{*}{\textbf{Method}}} & \multicolumn{2}{c}{\textbf{IMDB-B }} &  \multicolumn{2}{c}{\textbf{PROTEINS}}  &  \multicolumn{2}{c}{\textbf{ENZYMES}} &  \multicolumn{2}{c}{\textbf{COLLAB}} \\
\cmidrule{2-9}
& \multicolumn{1}{c}{\textbf{Accuracy} } & \multicolumn{1}{c}{\textbf{Macro-F1}} & \multicolumn{1}{c}{\textbf{Accuracy} } & \multicolumn{1}{c}{\textbf{Macro-F1}} & \multicolumn{1}{c}{\textbf{Accuracy} } & \multicolumn{1}{c}{\textbf{Macro-F1}}  & \multicolumn{1}{c}{\textbf{Accuracy} } & \multicolumn{1}{c}{\textbf{Macro-F1}} \\ 
\midrule

GPPT \dag* & 50.15 $\pm$ 0.75 & 44.16  $\pm$ 6.70 & 60.92 $\pm$ 2.47 & 47.07 $\pm$ 11.95 & 21.29 $\pm$ 3.79 & 19.87 $\pm$ 2.99 & 47.18 $\pm$ 5.93 & 42.87 $\pm$ 7.70 \\ 
All-in-one \dag* & 60.07 $\pm$ 4.81 & 56.88  $\pm$ 0.80 & 66.49 $\pm$ 6.26 & 64.68 $\pm$ 5.35 & 23.96 $\pm$ 1.45 & 19.66 $\pm$ 3.11 & 51.66 $\pm$ 0.26 & 47.78 $\pm$ 0.10 \\
GraphPrompt \dag* & 54.75 $\pm$ 12.43 & 52.10  $\pm$ 13.61 & 59.17 $\pm$ 11.26 & 58.30 $\pm$ 10.88 & 22.29 $\pm$ 3.50 & 19.52 $\pm$ 3.36 & 48.25 $\pm$ 13.64 & 43.35 $\pm$ 10.75 \\
GPF \dag* & 59.65 $\pm$ 5.06 & 56.22  $\pm$ 6.17 & 63.91 $\pm$ 3.26 & 57.01 $\pm$ 5.79 & 22.00 $\pm$ 1.25 &17.34 $\pm$ 2.45 & 47.42 $\pm$ 11.22 & 38.14 $\pm$ 0.44 \\
GPF-Plus \dag*  & 57.93 $\pm$ 1.62 & 55.55  $\pm$ 2.03 & 62.92 $\pm$ 2.78 & 57.58 $\pm$ 7.28 & 22.92 $\pm$ 1.64 & 18.39 $\pm$ 2.76 & 47.24 $\pm$ 0.29 & 41.24 $\pm$ 0.31 \\   
\midrule
ULTRA(3g) \ddag*  & 49.25 $\pm$ 0.00 & 38.87  $\pm$ 0.00 & 58.09 $\pm$ 0.00 & 37.48 $\pm$ 0.00 & 15.21 $\pm$ 0.00 & 5.84 $\pm$ 0.00 & \underline{64.53 $\pm$ 0.00} & \underline{55.36 $\pm$ 0.00} \\
SCORE  \ddag*   & \underline{61.83 $\pm$ 1.60} & \underline{60.91  $\pm$ 2.18} & \underline{68.54 $\pm$ 1.47} & \underline{65.23 $\pm$ 1.37}& \underline{22.92 $\pm$ 2.03} & \underline{21.77 $\pm$ 2.17} & \textbf{65.45 $\pm$ 1.05} & \textbf{57.71 $\pm$ 1.82}  \\
\textbf{LEDA (Ours) \ddag}  & \textbf{70.64 $\pm$ 1.91} & \textbf{70.60 $\pm$ 1.91} & \textbf{82.85 $\pm$ 1.02} & \textbf{82.67 $\pm$ 1.07} & \textbf{45.98 $\pm$ 2.03} & \textbf{40.07 $\pm$ 2.33} & 55.83 $\pm$ 1.19 & 51.28 $\pm$ 1.59  \\
\midrule
GCN* & 57.30 $\pm$ 0.98 & 54.62  $\pm$ 1.12 & 56.36 $\pm$ 7.97 & 46.69 $\pm$ 10.82 & 20.58 $\pm$ 2.00 & 15.25 $\pm$ 3.96 & 47.23 $\pm$ 0.61 & 41.10 $\pm$ 0.39 \\
\bottomrule
\end{tabular}
}
\label{tab_graph_classification}
\end{table*}
\textbf{Cross-domain zero-shot graph classification.} To enable a broader and fairer comparison of LEDA's performance in graph classification, we follow the evaluation setup from \cite{SCORE}.
Methods with $\dag$ refer to graph prompt-based algorithms under the 1-shot setting, while methods with $\ddag$ represent algorithms under the 0-shot setting, without fine-tuning or prompt tuning.
As shown in Table \ref{tab_graph_classification}, LEDA significantly outperforms baselines on most datasets, demonstrating its ability to learn effective knowledge from multi-domain graphs.
Notably, on the ENZYMES dataset, LEDA surpasses the second-best method by 23.06\% in Accuracy, further highlighting its strong generalization capability.
Moreover, it is worth noting that LEDA achieves moderate performance on COLLAB,in contrast to its consistently strong results in node classification.
This may be due to COLLAB's comparatively denser topology relative to the other three datasets, as shown in Table \ref{tab:datasets}.

\subsection{Model Analysis}
\label{sec:model_analysis}

\begin{table}
 \caption{Ablation studies on three cross-domain one-shot node classification datasets. CL means Contrastive Learning. The best result is bolded. \\}

\begin{tabular}{lccc}
\toprule
\textbf{Method} & \textbf{Cora} &\textbf{CiteSeer} & \textbf{PubMed} \\
\midrule
w/o DPU &30.87 $\pm$ 7.47 & 36.55 $\pm$ 7.83& 48.50 $\pm$ 9.23\\
w/o LDA 
& 34.55 $\pm$ 7.24
& 41.07 $\pm$ 7.60
& 52.87 $\pm$ 8.26
\\
DPU+CL
& 34.08 $\pm$ 7.12
& 40.85 $\pm$ 7.59
& 52.44 $\pm$ 8.30
\\
\textbf{LEDA} & \textbf{50.70 $\pm$ 10.67} & \textbf{43.83 $\pm$ 10.04} & \textbf{54.34 $\pm$ 11.08} \\
 \bottomrule
 
\end{tabular}
\label{tab:ablation}
\end{table}

To evaluate each component in LEDA, we conducted ablation studies on three datasets.
We first removed DPU and replaced it with SVD-based data alignment approach.
As shown in Table \ref{tab:ablation},  LEDA without DPU results in a significant performance drop across all datasets.
This supports our claim that while matrix factorization techniques can align the dimensions of different graph data, they may introduce redundant information or discard essential semantics, hindering the learning of universal knowledge from multi-domain graphs.
We then removed LDA, and similarly, it also resulted in varying degrees of performance degradation across all datasets. 
This indicates that although DPU preserves as much information as possible from different graph datasets, its lack of cross-domain universal knowledge extraction leads to suboptimal performance.
Furthermore, building on the removal of the LDA module, we introduced a contrastive loss to enhance the discriminability of representations in the unified space. 
Here, we use an InfoNCE-based contrastive loss, which aims to push anchor nodes away from negative samples. 
It is challenging to accurately define negative samples in the multi-domain pre-training setting; therefore, we adopt the mean feature of all nodes as the negative samples.
By comparing DPU+CL with w/o LDA, we observe that the model performance remains nearly unchanged or slightly degrades after introducing contrastive loss.
This aligns with our discussion in Section \ref{sec:analysis_of_ugp}, suggesting that contrastive loss may lead to sub-optimal performance in multi-domain scenarios.
Additional experimental results and detailed explanations are provided in Appendix \ref{app_supplementary_results}.

\section{Conclusion}
In this paper, we propose a novel universal graph pre-training model, LEDA, which effectively learns transferable knowledge from multi-domain graphs. 
By introducing a learnable domain projection unit, LEDA adaptively aligns features from multiple domains into a unified embedding space while preserving essential semantics and avoiding semantic conflicts. 
Furthermore, LEDA aligns the posterior distributions encoded from different domains with a shared prior distribution, enabling effective universal knowledge learning. Additionally, during distribution alignment, LEDA further guides the base vectors of domain-specific projections toward the shared semantic direction, thereby facilitating the learning of stable cross-domain representations. Extensive experimental results on multiple datasets and downstream tasks demonstrate the superior performance of LEDA. In few-shot cross-domain settings, LEDA outperforms existing methods by a considerable margin.

\begin{acks}
This work was supported by the National Natural Science Foundation of China (No. 62422210, and No. 62276187), the National Key Research and Development Program of China (No. 2023YFC3304503), the Hong Kong RGC theme-based Strategic Target Grant Scheme (STG STG1/M-501/23-N), the Hong Kong Global STEM Professor Scheme, the Hong Kong Jockey Club Charities Trust and the Natural Science Foundation of Jilin Province (Grant No. 20250102211JC).
\end{acks}

\bibliographystyle{ACM-Reference-Format}
\balance
\bibliography{slz}

@article{FUG,
  title={FUG: Feature-universal graph contrastive pre-training for graphs with diverse node features},
  author={Zhao, Jitao and Jin, Di and Ge, Meng and Shan, Lianze and Wang, Xin and He, Dongxiao and Feng, Zhiyong},
  journal={Advances in Neural Information Processing Systems},
  volume={37},
  pages={4003--4034},
  year={2024}
}

@inproceedings{zeroG,
  title={Zerog: Investigating cross-dataset zero-shot transferability in graphs},
  author={Li, Yuhan and Wang, Peisong and Li, Zhixun and Yu, Jeffrey Xu and Li, Jia},
  booktitle={Proceedings of the 30th ACM SIGKDD Conference on Knowledge Discovery and Data Mining},
  pages={1725--1735},
  year={2024}
}

@article{unilink,
  title={Universal Link Predictor By In-Context Learning on Graphs},
  author={Dong, Kaiwen and Mao, Haitao and Guo, Zhichun and Chawla, Nitesh V},
  journal={arXiv preprint arXiv:2402.07738},
  year={2024}
}

@article{ultra,
  title={Towards foundation models for knowledge graph reasoning},
  author={Galkin, Mikhail and Yuan, Xinyu and Mostafa, Hesham and Tang, Jian and Zhu, Zhaocheng},
  journal={arXiv preprint arXiv:2310.04562},
  year={2023}
}

@article{OFA,
  title={One for all: Towards training one graph model for all classification tasks},
  author={Liu, Hao and Feng, Jiarui and Kong, Lecheng and Liang, Ningyue and Tao, Dacheng and Chen, Yixin and Zhang, Muhan},
  journal={arXiv preprint arXiv:2310.00149},
  year={2023}
}

@inproceedings{graphcontrol,
  title={Graphcontrol: Adding conditional control to universal graph pre-trained models for graph domain transfer learning},
  author={Zhu, Yun and Wang, Yaoke and Shi, Haizhou and Zhang, Zhenshuo and Jiao, Dian and Tang, Siliang},
  booktitle={Proceedings of the ACM Web Conference 2024},
  pages={539--550},
  year={2024}
}

@inproceedings{gcc,
  title={Gcc: Graph contrastive coding for graph neural network pre-training},
  author={Qiu, Jiezhong and Chen, Qibin and Dong, Yuxiao and Zhang, Jing and Yang, Hongxia and Ding, Ming and Wang, Kuansan and Tang, Jie},
  booktitle={Proceedings of the 26th ACM SIGKDD international conference on knowledge discovery \& data mining},
  pages={1150--1160},
  year={2020}
}

@article{achiam2023gpt,
  title={Gpt-4 technical report},
  author={Achiam, Josh and Adler, Steven and Agarwal, Sandhini and Ahmad, Lama and Akkaya, Ilge and Aleman, Florencia Leoni and Almeida, Diogo and Altenschmidt, Janko and Altman, Sam and Anadkat, Shyamal and others},
  journal={arXiv preprint arXiv:2303.08774},
  year={2023}
}

@inproceedings{bert,
  title={Bert: Pre-training of deep bidirectional transformers for language understanding},
  author={Devlin, Jacob and Chang, Ming-Wei and Lee, Kenton and Toutanova, Kristina},
  booktitle={Proceedings of the 2019 conference of the North American chapter of the association for computational linguistics: human language technologies, volume 1 (long and short papers)},
  pages={4171--4186},
  year={2019}
}

@inproceedings{samgpt,
  title={SAMGPT: Text-free graph foundation model for multi-domain pre-training and cross-domain adaptation},
  author={Yu, Xingtong and Gong, Zechuan and Zhou, Chang and Fang, Yuan and Zhang, Hui},
  booktitle={Proceedings of the ACM on Web Conference 2025},
  pages={1142--1153},
  year={2025}
}

@article{molecular_analysis,
  title={ModuleDiscoverer: Identification of regulatory modules in protein-protein interaction networks},
  author={Vlaic, Sebastian and Conrad, Theresia and Tokarski-Schnelle, Christian and Gustafsson, Mika and Dahmen, Uta and Guthke, Reinhard and Schuster, Stefan},
  journal={Scientific reports},
  volume={8},
  number={1},
  pages={433},
  year={2018},
  publisher={Nature Publishing Group UK London}
}

@article{citationnetwork,
  title={Collective classification in network data},
  author={Sen, Prithviraj and Namata, Galileo and Bilgic, Mustafa and Getoor, Lise and Galligher, Brian and Eliassi-Rad, Tina},
  journal={AI magazine},
  volume={29},
  number={3},
  pages={93--93},
  year={2008}
}

@inproceedings{Cora,
  title={Revisiting semi-supervised learning with graph embeddings},
  author={Yang, Zhilin and Cohen, William and Salakhudinov, Ruslan},
  booktitle={International conference on machine learning},
  pages={40--48},
  year={2016},
  organization={PMLR}
}

@inproceedings{PhotoComputers,
  title={Image-based recommendations on styles and substitutes},
  author={McAuley, Julian and Targett, Christopher and Shi, Qinfeng and Van Den Hengel, Anton},
  booktitle={Proceedings of the 38th international ACM SIGIR conference on research and development in information retrieval},
  pages={43--52},
  year={2015}
}

@inproceedings{CSPhysics,
  title={An overview of microsoft academic service (mas) and applications},
  author={Sinha, Arnab and Shen, Zhihong and Song, Yang and Ma, Hao and Eide, Darrin and Hsu, Bo-June and Wang, Kuansan},
  booktitle={Proceedings of the 24th international conference on world wide web},
  pages={243--246},
  year={2015}
}

@inproceedings{socialnetworks,
  title={Deep graph kernels},
  author={Yanardag, Pinar and Vishwanathan, SVN},
  booktitle={Proceedings of the 21th ACM SIGKDD international conference on knowledge discovery and data mining},
  pages={1365--1374},
  year={2015}
}

@article{PROTEINS,
  title={Distinguishing enzyme structures from non-enzymes without alignments},
  author={Dobson, Paul D and Doig, Andrew J},
  journal={Journal of molecular biology},
  volume={330},
  number={4},
  pages={771--783},
  year={2003},
  publisher={Elsevier}
}

@article{SVD,
  title={On the early history of the singular value decomposition},
  author={Stewart, Gilbert W},
  journal={SIAM review},
  volume={35},
  number={4},
  pages={551--566},
  year={1993},
  publisher={SIAM}
}

@inproceedings{GPPT,
  title={Gppt: Graph pre-training and prompt tuning to generalize graph neural networks},
  author={Sun, Mingchen and Zhou, Kaixiong and He, Xin and Wang, Ying and Wang, Xin},
  booktitle={Proceedings of the 28th ACM SIGKDD Conference on Knowledge Discovery and Data Mining},
  pages={1717--1727},
  year={2022}
}

@inproceedings{GraphPrompt,
  title={Graphprompt: Unifying pre-training and downstream tasks for graph neural networks},
  author={Liu, Zemin and Yu, Xingtong and Fang, Yuan and Zhang, Xinming},
  booktitle={Proceedings of the ACM web conference 2023},
  pages={417--428},
  year={2023}
}

@article{GPF,
  title={Universal prompt tuning for graph neural networks},
  author={Fang, Taoran and Zhang, Yunchao and Yang, Yang and Wang, Chunping and Chen, Lei},
  journal={Advances in Neural Information Processing Systems},
  volume={36},
  pages={52464--52489},
  year={2023}
}

@inproceedings{GCOPE,
  title={All in one and one for all: A simple yet effective method towards cross-domain graph pretraining},
  author={Zhao, Haihong and Chen, Aochuan and Sun, Xiangguo and Cheng, Hong and Li, Jia},
  booktitle={Proceedings of the 30th ACM SIGKDD Conference on Knowledge Discovery and Data Mining},
  pages={4443--4454},
  year={2024}
}

@article{MDGPT,
  title={Text-free multi-domain graph pre-training: Toward graph foundation models},
  author={Yu, Xingtong and Zhou, Chang and Fang, Yuan and Zhang, Xinming},
  journal={arXiv preprint arXiv:2405.13934},
  year={2024}
}

@article{GRACE,
  author       = {Yanqiao Zhu and
                  Yichen Xu and
                  Feng Yu and
                  Qiang Liu and
                  Shu Wu and
                  Liang Wang},
  title        = {Deep Graph Contrastive Representation Learning},
  journal      = {CoRR},
  volume       = {abs/2006.04131},
  year         = {2020},
  url          = {https://arxiv.org/abs/2006.04131},
  eprinttype    = {arXiv},
  eprint       = {2006.04131},
  bibsource    = {dblp computer science bibliography, https://dblp.org}
}

@inproceedings{DGI,
  author       = {Petar Velickovic and
                  William Fedus and
                  William L. Hamilton and
                  Pietro Li{\`{o}} and
                  Yoshua Bengio and
                  R. Devon Hjelm},
  title        = {Deep Graph Infomax},
  booktitle    = {7th International Conference on Learning Representations, {ICLR} 2019,
                  New Orleans, LA, USA, May 6-9, 2019},
  publisher    = {OpenReview.net},
  year         = {2019},
  url          = {https://openreview.net/forum?id=rklz9iAcKQ},
  bibsource    = {dblp computer science bibliography, https://dblp.org}
}

@inproceedings{BGRL,
  title={Bootstrapped representation learning on graphs},
  author={Thakoor, Shantanu and Tallec, Corentin and Azar, Mohammad Gheshlaghi and Munos, R{\'e}mi and Veli{\v{c}}kovi{\'c}, Petar and Valko, Michal},
  booktitle={ICLR 2021 Workshop on Geometrical and Topological Representation Learning},
  year={2021}
}

@article{GVAE,
  author       = {Thomas N. Kipf and
                  Max Welling},
  title        = {Variational Graph Auto-Encoders},
  journal      = {CoRR},
  volume       = {abs/1611.07308},
  year         = {2016},
  url          = {http://arxiv.org/abs/1611.07308},
  eprinttype    = {arXiv},
  eprint       = {1611.07308},
  bibsource    = {dblp computer science bibliography, https://dblp.org}
}

@inproceedings{GCN,
  author       = {Thomas N. Kipf and
                  Max Welling},
  title        = {Semi-Supervised Classification with Graph Convolutional Networks},
  booktitle    = {5th International Conference on Learning Representations, {ICLR} 2017,
                  Toulon, France, April 24-26, 2017, Conference Track Proceedings},
  publisher    = {OpenReview.net},
  year         = {2017},
  url          = {https://openreview.net/forum?id=SJU4ayYgl},
  bibsource    = {dblp computer science bibliography, https://dblp.org}
}

@inproceedings{GAT,
  author       = {Petar Velickovic and
                  Guillem Cucurull and
                  Arantxa Casanova and
                  Adriana Romero and
                  Pietro Li{\`{o}} and
                  Yoshua Bengio},
  title        = {Graph Attention Networks},
  booktitle    = {6th International Conference on Learning Representations, {ICLR} 2018,
                  Vancouver, BC, Canada, April 30 - May 3, 2018, Conference Track Proceedings},
  publisher    = {OpenReview.net},
  year         = {2018},
  url          = {https://openreview.net/forum?id=rJXMpikCZ},
  bibsource    = {dblp computer science bibliography, https://dblp.org}
}

@inproceedings{GraphSAGE,
  author       = {William L. Hamilton and
                  Zhitao Ying and
                  Jure Leskovec},
  title        = {Inductive Representation Learning on Large Graphs},
  booktitle    = {Advances in Neural Information Processing Systems 30: Annual Conference
                  on Neural Information Processing Systems,
                  Long Beach, CA, {USA}},
  pages        = {1024--1034},
  year         = {2017},
  bibsource    = {dblp computer science bibliography, https://dblp.org}
}

@article{InfoNCE,
  title={Representation learning with contrastive predictive coding},
  author={Oord, Aaron van den and Li, Yazhe and Vinyals, Oriol},
  journal={arXiv preprint arXiv:1807.03748},
  year={2018}
}

@inproceedings{GCA,
  author       = {Yanqiao Zhu and
                  Yichen Xu and
                  Feng Yu and
                  Qiang Liu and
                  Shu Wu and
                  Liang Wang},
  title        = {Graph Contrastive Learning with Adaptive Augmentation},
  booktitle    = {{WWW} '21: The Web Conference 2021, Virtual Event / Ljubljana, Slovenia,
                  April 19-23, 2021},
  pages        = {2069--2080},
  publisher    = {{ACM} / {IW3C2}},
  year         = {2021},
  url          = {https://doi.org/10.1145/3442381.3449802},
  bibsource    = {dblp computer science bibliography, https://dblp.org}
}

@article{GNNsurvey1,
  title={A comprehensive survey on graph neural networks},
  author={Wu, Zonghan and Pan, Shirui and Chen, Fengwen and Long, Guodong and Zhang, Chengqi and Philip, S Yu},
  journal      = {{IEEE} Trans. Neural Networks Learn. Syst.},
  volume       = {32},
  number       = {1},
  pages        = {4--24},
  year         = {2021},
  url          = {https://doi.org/10.1109/TNNLS.2020.2978386},
  bibsource    = {dblp computer science bibliography, https://dblp.org}
}

@article{GNNsurvey2,
  title={Graph neural networks: A review of methods and applications},
  author={Zhou, Jie and Cui, Ganqu and Hu, Shengding and Zhang, Zhengyan and Yang, Cheng and Liu, Zhiyuan and Wang, Lifeng and Li, Changcheng and Sun, Maosong},
  journal={AI open},
  volume={1},
  pages={57--81},
  year={2020},
  publisher={Elsevier}
}

@article{adamw,
  title={Decoupled weight decay regularization},
  author={Loshchilov, Ilya and Hutter, Frank},
  journal={arXiv preprint arXiv:1711.05101},
  year={2017}
}

@inproceedings{deepwalk,
  title={Deepwalk: Online learning of social representations},
  author={Perozzi, Bryan and Al-Rfou, Rami and Skiena, Steven},
  booktitle={The 20th {ACM} {SIGKDD} International Conference on Knowledge Discovery
             and Data Mining, {KDD} '14, New York, NY, {USA} - August 24 - 27, 2014},
  pages={701--710},
  publisher={{ACM}},
  year={2014},
  url={https://doi.org/10.1145/2623330.2623732}
}

@inproceedings{MaskGAE,
  author       = {Jintang Li and
                  Ruofan Wu and
                  Wangbin Sun and
                  Liang Chen and
                  Sheng Tian and
                  Liang Zhu and
                  Changhua Meng and
                  Zibin Zheng and
                  Weiqiang Wang},
  title        = {What's Behind the Mask: Understanding Masked Graph Modeling for Graph
                  Autoencoders},
  booktitle    = {Proceedings of the 29th {ACM} {SIGKDD} Conference on Knowledge Discovery
                  and Data Mining, {KDD} 2023, Long Beach, CA, USA, August 6-10, 2023},
  pages        = {1268--1279},
  publisher    = {{ACM}},
  year         = {2023},
  url          = {https://doi.org/10.1145/3580305.3599546},
  bibsource    = {dblp computer science bibliography, https://dblp.org}
}

@inproceedings{GraphMAE,
  author       = {Zhenyu Hou and
                  Xiao Liu and
                  Yukuo Cen and
                  Yuxiao Dong and
                  Hongxia Yang and
                  Chunjie Wang and
                  Jie Tang},
  title        = {GraphMAE: Self-Supervised Masked Graph Autoencoders},
  booktitle    = {{KDD} '22: The 28th {ACM} {SIGKDD} Conference on Knowledge Discovery
                  and Data Mining, Washington, DC, USA, August 14 - 18, 2022},
  pages        = {594--604},
  publisher    = {{ACM}},
  year         = {2022},
  url          = {https://doi.org/10.1145/3534678.3539321},
  bibsource    = {dblp computer science bibliography, https://dblp.org}
}

@article{bioinformatics,
  title={Graph neural networks and their current applications in bioinformatics},
  author={Zhang, Xiao-Meng and Liang, Li and Liu, Lin and Tang, Ming-Jing},
  journal={Frontiers in genetics},
  volume={12},
  pages={690049},
  year={2021},
  publisher={Frontiers Media SA}
}

@inproceedings{LightGCN,
  author       = {Xiangnan He and
                  Kuan Deng and
                  Xiang Wang and
                  Yan Li and
                  Yong{-}Dong Zhang and
                  Meng Wang},
  editor       = {Jimmy X. Huang and
                  Yi Chang and
                  Xueqi Cheng and
                  Jaap Kamps and
                  Vanessa Murdock and
                  Ji{-}Rong Wen and
                  Yiqun Liu},
  title        = {LightGCN: Simplifying and Powering Graph Convolution Network for Recommendation},
  booktitle    = {Proceedings of the 43rd International {ACM} {SIGIR} conference on
                  research and development in Information Retrieval, {SIGIR} 2020, Virtual
                  Event, China, July 25-30, 2020},
  pages        = {639--648},
  publisher    = {{ACM}},
  year         = {2020},
  url          = {https://doi.org/10.1145/3397271.3401063},
  doi          = {10.1145/3397271.3401063},
  bibsource    = {dblp computer science bibliography, https://dblp.org}
}

@article{Matrix_Factorization ,
  title={An Introduction to Matrix factorization and Factorization Machines in Recommendation System, and Beyond},
  author={Zhang, Yuefeng},
  journal={arXiv preprint arXiv:2203.11026},
  year={2022}
}

@book{randomwalk,
  title={Random walk: a modern introduction},
  author={Lawler, Gregory F and Limic, Vlada},
  volume={123},
  year={2010},
  publisher={Cambridge University Press}
}

@inproceedings{social_network_2,
  title={Consisrec: Enhancing gnn for social recommendation via consistent neighbor aggregation},
  author={Yang, Liangwei and Liu, Zhiwei and Dou, Yingtong and Ma, Jing and Yu, Philip S},
  booktitle={Proceedings of the 44th international ACM SIGIR conference on Research and development in information retrieval},
  pages={2141--2145},
  year={2021}
}

@inproceedings{ReLU,
  title={Rectified linear units improve restricted boltzmann machines},
  author={Nair, Vinod and Hinton, Geoffrey E},
  booktitle={Proceedings of the 27th international conference on machine learning (ICML-10)},
  pages={807--814},
  year={2010}
}

@article{grl_survey,
  title={Graph representation learning: a survey},
  author={Chen, Fenxiao and Wang, Yun-Cheng and Wang, Bin and Kuo, C-C Jay},
  journal={APSIPA Transactions on Signal and Information Processing},
  volume={9},
  pages={e15},
  year={2020},
  publisher={Cambridge University Press}
}

@article{ENZYMES,
  title={Protein function prediction via graph kernels},
  author={Borgwardt, Karsten M and Ong, Cheng Soon and Sch{\"o}nauer, Stefan and Vishwanathan, SVN and Smola, Alex J and Kriegel, Hans-Peter},
  journal={Bioinformatics},
  volume={21},
  number={suppl\_1},
  pages={i47--i56},
  year={2005},
  publisher={Oxford University Press}
}

@article{SCORE,
  title={Towards Graph Foundation Models: The Perspective of Zero-shot Reasoning on Knowledge Graphs},
  author={Wang, Kai and Luo, Siqiang},
  journal={arXiv preprint arXiv:2410.12609},
  year={2024}
}

@inproceedings{All-in-One,
  author       = {Xiangguo Sun and
                  Hong Cheng and
                  Jia Li and
                  Bo Liu and
                  Jihong Guan},
  title        = {All in One: Multi-task Prompting for Graph Neural Networks (Extended
                  Abstract)},
  booktitle    = {Proceedings of the Thirty-Third International Joint Conference on
                  Artificial Intelligence, {IJCAI} 2024, Jeju, South Korea, August 3-9,
                  2024},
  pages        = {8460--8465},
  publisher    = {ijcai.org},
  year         = {2024},
  url          = {https://www.ijcai.org/proceedings/2024/942},
  bibsource    = {dblp computer science bibliography, https://dblp.org}
}

@article{graph-pretrain-survey,
  title={A survey of pretraining on graphs: Taxonomy, methods, and applications},
  author={Xia, Jun and Zhu, Yanqiao and Du, Yuanqi and Li, Stan Z},
  journal={arXiv preprint arXiv:2202.07893},
  year={2022}
}

@inproceedings{MDP-GNN,
  title={Unified Graph Neural Networks Pre-training for Multi-domain Graphs},
  author={Lin, Mingkai and Hong, Xiaobin and Li, Wenzhong and Lu, Sanglu},
  booktitle={Proceedings of the AAAI Conference on Artificial Intelligence},
  volume={39},
  number={11},
  pages={12165--12173},
  year={2025}
}

@inproceedings{MINE,
  title={On variational bounds of mutual information},
  author={Poole, Ben and Ozair, Sherjil and Van Den Oord, Aaron and Alemi, Alex and Tucker, George},
  booktitle={International conference on machine learning},
  pages={5171--5180},
  year={2019},
  organization={PMLR}
}

@inproceedings{E2Neg,
  title={Does GCL Need a Large Number of Negative Samples? Enhancing Graph Contrastive Learning with Effective and Efficient Negative Sampling},
  author={Huang, Yongqi and Zhao, Jitao and He, Dongxiao and Jin, Di and Huang, Yuxiao and Wang, Zhen},
  booktitle={Proceedings of the AAAI Conference on Artificial Intelligence},
  volume={39},
  number={16},
  pages={17511--17518},
  year={2025}
}

@inproceedings{Str-GCL,
  title={Str-GCL: Structural Commonsense Driven Graph Contrastive Learning},
  author={He, Dongxiao and Huang, Yongqi and Zhao, Jitao and Wang, Xiaobao and Wang, Zhen},
  booktitle={Proceedings of the ACM on Web Conference 2025},
  pages={1129--1141},
  year={2025}
}

@article{UniPrompt,
  title={One Prompt Fits All: Universal Graph Adaptation for Pretrained Models},
  author={Huang, Yongqi and Zhao, Jitao and He, Dongxiao and Wang, Xiaobao and Li, Yawen and Huang, Yuxiao and Jin, Di and Feng, Zhiyong},
  journal={arXiv preprint arXiv:2509.22416},
  year={2025}
}

@article{SGRL,
  title={Exploitation of a latent mechanism in graph contrastive learning: Representation scattering},
  author={He, Dongxiao and Shan, Lianze and Zhao, Jitao and Zhang, Hengrui and Wang, Zhen and Zhang, Weixiong},
  journal={Advances in Neural Information Processing Systems},
  volume={37},
  pages={115351--115376},
  year={2024}
}

\appendix

\section{Experimental Settings}
\label{app_exp_settings}

\begin{table*}[ht]
\centering
\caption{Dataset statistics of node and graph classification.}
\resizebox{0.80\textwidth}{!}{
\begin{tabular}{lcccccc}
\toprule
\multicolumn{7}{c}{\textbf{Node-level}}\\
\midrule
\textbf{Dtasets}   & \textbf{Graphs} & \textbf{Nodes} & \textbf{Edges} & \textbf{Features} & \textbf{Classes} & \textbf{Domain} \\
\midrule
Cora        &1    & 2,708          & 4,732          & 1,433             & 7 & Citation networks              \\
CiteSeer    &1    & 3,327          & 5,429          & 3,703             & 6    & Citation networks            \\
PubMed     &1     & 19,717         & 44,338         & 500               & 3          & Citation networks      \\
Photo      &1     & 7,650          & 119,081        & 745               & 8        & Co-purchase networks        \\
Computers   &1    & 13,752         & 245,861        & 767               & 10      & Co-purchase networks        \\
CS      &1        & 18,333         & 81,894         & 6,805             & 15      & Co-author networks        \\
Physics   &1      & 34,493         & 247,962        & 8,415             & 5      & Co-author networks         \\
\midrule
\multicolumn{7}{c}{\textbf{Graph-level}}\\
\midrule
\textbf{Dtasets}   & \textbf{Graphs} & \textbf{Nodes} & \textbf{Edges} & \textbf{Features} & \textbf{Classes} &\textbf{Domain} \\
\midrule
IMDB-BINARY      & 1,000          & 19.8              & 96.53             & 0     &2 & Social networks \\
COLLAB           & 5,000          & 74.5              & 2457.8            & 0   & 3 & Social networks \\
PROTEINS         & 1,113          & 39.1              & 72.8              & 3  & 2 & Biological networks\\
ENZYMES          & 600            & 32.6              & 62.1              & 3   & 6  & Biological networks\\
\bottomrule
\end{tabular}
}
\label{tab:datasets}
\end{table*}

\subsection{Implementation Details}
\label{app_implementation_detail}
In this subsection, we elaborate on the details of our experimental implementation.
In different scenarios, we follow different prior works, as each of them only covers a subset of our scenarios.
For some methods whose reproduced performance was lower than reported, we instead report their best-known accuracy for fairness.
All experiments were conducted on NVIDIA GeForce GTX 3090 GPU (24GB memory).
\textbf{For the cross-domain node classification task}, we follow \cite{BGRL}. 
We first freeze the parameters of the pre-trained LEDA model and obtain node representations for the test data.
Next, we train a downstream classifier using 10\% of the test data, where the classifier is a simple linear model optimized with a logistic regression loss. We then evaluate the model on the remaining 90\% of the data. We run LEDA 20 times and report the mean and standard deviation of the results.
\textbf{For the cross-domain few-shot node classification task}, we follow \cite{MDGPT}.
We freeze the parameters of the pre-trained LEDA model and obtain node representations for the test data.
For each class, we randomly select and label $k$ samples, and compute their average as the class prototype vectors.
Final predictions are made by measuring the similarity between each test node and the class prototypes.
Moreover, it is worth noting that, compared to \cite{MDGPT}, our training domain includes the Co.CS dataset, which introduces greater domain diversity and poses a more challenging setting for evaluating the ability to learn generalizable knowledge across domains.
To ensure statistical robustness, we run 500 times and report the mean and standard deviation.
\textbf{For the cross-domain zero-shot graph classification task}, we follow \cite{SCORE}. 
We keep the pre-trained LEDA model frozen and derive graph-level representations via a pooling function. 
Predictions are made based on their similarity to class prototype vectors. 

Moreover, for all the scenarios above, we adopt a single-layer GCN as the encoder for the LEDA model and use AdamW \cite{adamw} as the optimizer. To ensure reproducibility, we fix the random seed to $66666$. 
Besides, in implementation, due to potentially differences in topologies across datasets, we apply different numbers of additional propagation steps after obtaining initial representations. 
It is worth noting that we use a shared set of hyper-parameters across all datasets in each scenario.

\subsection{Supplementary Experimental Results}
\label{app_supplementary_results}

\textbf{Detailed values corresponding to the 3-shot and 5-shot plots in Figure \ref{fig:few_shot}.}
We present the detailed accuracy of LEDA under the 3-shot and 5-shot settings in Tables \ref{tab:3-shot-1} and \ref{tab:5-shot-1}.
\begin{table*}[h]
\centering
\caption{Performance on cross-domain 3-shot node classification.  Each column represents a test domain, while others are train domains. The best results are bolded and the second-best are underlined.}
\resizebox{1.0\textwidth}{!}{
\begin{tabular}{lccccccc}
\hline
\textbf{Method} \textbackslash \textbf{Test data}  & \textbf{Cora} & \textbf{CiteSeer} & \textbf{PubMed} & \textbf{Photo} & \textbf{Computers} & \textbf{CS} \\
\hline
DGI  & 33.00 $\pm$ 4.79 & 34.39 $\pm$ 4.73	&53.12 $\pm$ 6.73 &56.88 $\pm$  5.95	&37.55 $\pm$  5.49 &76.44 $\pm$ 2.54\\
BGRL  & 43.84 $\pm$ 4.99	&46.71 $\pm$ 5.41	& 54.86 $\pm$ 6.30	& 58.58 $\pm$ 6.43	& 45.65 $\pm$ 7.96  &79.69 $\pm$ 2.73\\
GraphMAE  & 47.44 $\pm$ 6.18 &49.02 $\pm$ 5.77 & 60.10 $\pm$ 6.43 &67.96 $\pm$ 7.99 &55.41 $\pm$ 10.07 &81.95 $\pm$ 2.58 \\
\hline
FUG  & \underline{57.92 $\pm$ 6.80}  & \underline{49.85 $\pm$ 5.45} & \underline{61.96 $\pm$ 5.77} & \textbf{68.86 $\pm$ 7.39} & \underline{56.20 $\pm$ 9.73} & \underline{85.67 $\pm$ 1.85}\\
\textbf{LEDA} & \textbf{58.91 $\pm$ 7.15}	&\textbf{57.42 $\pm$ 6.45}	&\textbf{67.41 $\pm$ 7.91}	& \underline{68.39 $\pm$ 8.43}	& \textbf{56.34 $\pm$ 12.05} & \textbf{86.75 $\pm$ 1.98} \\
\hline
\end{tabular}
}
\label{tab:3-shot-1}
\end{table*}

\begin{table*}[h]
\centering
\caption{Performance on cross-domain 5-shot node classification.  Each column represents a test domain, while others are train domains. The best results are bolded and the second-best are underlined.}
\resizebox{1.0\textwidth}{!}{
\begin{tabular}{lccccccc}
\hline
\textbf{Method} \textbackslash \textbf{Test data}  & \textbf{Cora} & \textbf{CiteSeer} & \textbf{PubMed} & \textbf{Photo} & \textbf{Computers} & \textbf{CS} \\
\hline
DGI  & 33.59 $\pm$ 3.97	& 45.64 $\pm$ 4.11	& 55.00 $\pm$ 5.15 & 59.04 $\pm$ 5.42	& 38.79 $\pm$ 5.42 & 79.63 $\pm$ 1.71\\
BGRL  & 45.84 $\pm$ 4.27	&51.64 $\pm$ 3.85	& 58.04 $\pm$ 5.77 & 58.08 $\pm$ 5.11 	& 48.71 $\pm$ 7.17 & 83.99 $\pm$ 2.24 \\ 
GraphMAE & 52.73 $\pm$ 4.66 &52.22 $\pm$ 5.58 &63.62 $\pm$ 5.28 & 68.48 $\pm$ 8.26 & 57.54$\pm$10.21 & 84.10 $\pm$ 1.71  \\
\hline
FUG  & \underline{64.95 $\pm$ 4.70} & \underline{55.64 $\pm$4.16} & \underline{64.51 $\pm$ 4.84} & \underline{70.27 $\pm$ 6.29} & \underline{58.32 $\pm$ 8.64} & \underline{88.04 $\pm$ 1.23}\\
\textbf{LEDA} & \textbf{66.07 $\pm$  5.72}	&\textbf{62.54 $\pm$ 3.26}	&\textbf{70.62 $\pm$  6.07}   & \textbf{71.08 $\pm$ 7.51}	& \textbf{58.79 $\pm$  11.8} & \textbf{88.47 $\pm$  1.41}	\\
\hline
\end{tabular}
}
\label{tab:5-shot-1}
\end{table*}

\begin{table*}[h]
\centering
\caption{Performance on cross-domain few-shot node classification.  The best results are bolded and the second-best are underlined.}
\resizebox{1.0\textwidth}{!}{
\begin{tabular}{lccccccc}
\toprule
\multicolumn{7}{c}{\textbf{Cross-domain 1-shot node classification}}\\
\midrule
\textbf{Method} \textbackslash \textbf{Test data}  & \textbf{Cora} & \textbf{CiteSeer} & \textbf{PubMed} & \textbf{Photo} & \textbf{Computers} & \textbf{CS} \\
\toprule
DGI &27.93$\pm$4.83	&26.33$\pm$4.08	&40.62$\pm$5.39		&34.50$\pm$6.82	&24.12$\pm$5.00 &33.88$\pm$6.14 \\
BGRL  &35.97$\pm$5.43	&26.03$\pm$3.65	&42.07$\pm$4.39		&38.49$\pm$6.07	&33.49$\pm$7.88 &50.87$\pm$4.72 \\ 
GraphMAE & 37.67$\pm$7.79	&36.43$\pm$8.45	&48.89$\pm$9.20		&55.80$\pm$9.48	&43.22$\pm$11.55 &66.45$\pm$7.58 \\
\midrule
FUG  & \underline{40.88 $\pm$ 8.65}	& \underline{37.31 $\pm$ 7.50}	& \underline{50.38$\pm$9.60}	& \underline{60.46$\pm$10.38}	& \underline{45.72$\pm$11.72} & \underline{72.57$\pm$8.60}	 \\

\textbf{LEDA} & \textbf{50.51$\pm$10.94}	&\textbf{44.83$\pm$10.05}	&\textbf{54.18$\pm$11.03} &\textbf{63.95 $\pm$ 9.64}	&\textbf{50.01$\pm$11.93} &\textbf{76.25$\pm$6.94} \\
\bottomrule
\multicolumn{7}{c}{\textbf{Cross-domain 3-shot node classification}}\\
\toprule
DGI &35.36$\pm$4.67	&30.90$\pm$3.64	&42.79$\pm$4.72		&48.56$\pm$6.02	&27.80$\pm$4.70  &66.67$\pm$3.02\\
BGRL & 44.00$\pm$4.50	&33.28$\pm$3.75	&44.65$\pm$4.32		&44.84$\pm$6.64	&37.24$\pm$8.99  &64.92$\pm$3.21\\ 
GraphMAE & 50.33$\pm$5.96	&45.58$\pm$6.68	&58.77$\pm$6.55		&62.33$\pm$7.66	&55.76$\pm$8.35 &84.43$\pm$1.83\\
\midrule
FUG  & \underline{51.62$\pm$7.15}	& \underline{48.40$\pm$5.65}	& \underline{60.52$\pm$6.52}	& \underline{65.75$\pm$7.33}	& \underline{56.61$\pm$9.80} & \underline{85.97$\pm$1.95}	 \\
\textbf{LEDA} & \textbf{60.05$\pm$7.90}	&\textbf{57.76$\pm$6.04}	&\textbf{64.47$\pm$7.76}	 &\textbf{66.77$\pm$8.48}	&\textbf{58.36$\pm$10.96}  &\textbf{86.43$\pm$2.12} \\
\bottomrule
\multicolumn{7}{c}{\textbf{Cross-domain 5-shot node classification}}\\

\toprule
DGI &41.08$\pm$4.27	&35.93$\pm$3.64	&46.39$\pm$4.13	&38.49$\pm$4.83	&30.01$\pm$5.53 &69.25$\pm$3.22\\
BGRL &50.08$\pm$3.86	&37.18$\pm$3.43	&48.78$\pm$4.05 &46.68$\pm$6.10	&41.63$\pm$8.62 &77.95$\pm$2.26 \\ 
GraphMAE &56.67$\pm$4.68	& 52.86$\pm$3.50	& 60.87$\pm$5.44 &60.12$\pm$8.66 &52.77$\pm$9.78 &83.24$\pm$2.11 \\
\midrule
FUG  & \underline{61.98±5.43}	& \underline{54.19±3.98}	& \underline{63.10±5.34}	& \textbf{70.34±6.20}	& \textbf{62.11±8.49} & \underline{87.95±1.21}	 \\
\textbf{LEDA} & \textbf{63.29$\pm$7.31}	&\textbf{62.67$\pm$3.63}	&\textbf{68.20$\pm$5.56}	&\underline{66.43$\pm$7.90}	&\underline{60.60$\pm$10.62} &\textbf{88.15$\pm$1.46} \\
\bottomrule
\end{tabular}
}
\label{tab:few-shot-2}
\end{table*}

\textbf{Supplementary cross-domain few-shot node classification results.} 
Although the above few-shot cross-domain experiments demonstrate the strong performance of LEDA, the training and testing sets still involve datasets from similar domains, such as the citation networks (Cora, CiteSeer, PubMed) and the co-purchase networks (Photo, Computers).
To better simulate real cross-domain scenarios, we construct a more challenging setting: using citation networks as the training domain and evaluating on co-purchase and co-author networks, and vice versa. 
We conduct experiments under the 1-shot, 3-shot, and 5-shot settings, and report the corresponding results on Table \ref{tab:few-shot-2}.

\begin{table*}
\centering
\caption{Supplementary ablation studies. The best results are bolded.}
\resizebox{1.0\textwidth}{!}{
\begin{tabular}{llccc|ccc}
\toprule
& & \multicolumn{3}{c|}{\textbf{3-shot}} & \multicolumn{3}{c}{\textbf{5-shot}} \\
\cmidrule(r){3-5} \cmidrule(l){6-8}
& & \textbf{Cora} & \textbf{CiteSeer} & \textbf{PubMed} & \textbf{Cora} & \textbf{CiteSeer} & \textbf{PubMed} \\
\midrule
\multirow{4}{*}{\textbf{Cross-domain}} 
& w/o DPU  &44.07$\pm$6.39 &50.91$\pm$5.29 & 60.79$\pm$7.29 &50.56$\pm$5.27 &56.56$\pm$3.72 &65.88$\pm$5.48\\
& w/o LDA  &45.84$\pm$6.01 &52.46$\pm$5.24 &62.25$\pm$6.41 &51.07$\pm$4.48 &57.32$\pm$3.67&65.70$\pm$4.85\\
& DPU+CL  &44.98$\pm$6.01 &52.15$\pm$5.27 &61.52$\pm$6.43 &50.01$\pm$4.58 &57.02$\pm$3.71&65.08$\pm$4.92\\
& \textbf{LEDA} & \textbf{58.91$\pm$7.15}	&\textbf{57.42$\pm$6.45}	&\textbf{67.41$\pm$7.91} & \textbf{66.07$\pm$5.72}	&\textbf{62.54$\pm$3.26}	&\textbf{70.62$\pm$6.07} \\
\bottomrule

\end{tabular}
}
\end{table*}

\section{Proof of Proposition \ref{prop:mi_contrastive}}
\begin{proof}
By definition, the mutual information is:
\begin{equation}
    I(\mathcal{D}_{\mathcal{G}^i}; \mathcal{D}_{\mathcal{G}^j}) = \mathbb{E}_{x, x'} \left[ \log \frac{p(x, x')}{p(x)p(x')} \right].
\end{equation}
Substituting the assumed forms $p(x, x') = e^{s_{ij}} / Z$ and $p(x)p(x') = e^{\xi(x, x')}$ yields:
\begin{equation}
    I(\mathcal{D}_{\mathcal{G}^i}; \mathcal{D}_{\mathcal{G}^j}) = \mathbb{E}_{x, x'} \left[ \log \left( \frac{e^{s_{ij}} / Z}{e^{\xi(x, x')}} \right) \right].
\end{equation}
Simplifying the logarithm gives:
\begin{equation}
    I(\mathcal{D}_{\mathcal{G}^i}; \mathcal{D}_{\mathcal{G}^j}) = \mathbb{E}_{x, x'} \left[ s_{ij} - \log Z - \xi(x, x') \right].
\end{equation}
Applying the linearity of expectation separates the terms:
\begin{equation}
    I(\mathcal{D}_{\mathcal{G}^i}; \mathcal{D}_{\mathcal{G}^j}) = \mathbb{E}[s_{ij}] - \log Z - \mathbb{E}[\xi(x, x')].
\end{equation}
Denoting $\Delta = \mathbb{E}[\xi(x, x')]$, we obtain:
\begin{equation}
    I(\mathcal{D}_{\mathcal{G}^i}; \mathcal{D}_{\mathcal{G}^j}) = \mathbb{E}[s_{ij}] - \log Z - \Delta.
\end{equation}
\end{proof}

\section{Supplementary of Discussion}
\label{app_supplement_dis}
\textbf{Discussion of the potential semantic conflicts across domains.}
The potential semantic conflicts across domains has been acknowledged in previous work \cite{MDGPT}.
A key challenge in this context lies in the fact that features which are semantically meaningful in one domain may carry entirely different implications in another.
For instance, a densely connected subgraph in a citation network like Cora may reflect topic coherence among papers, while in a co-purchase network such as Amazon Computers, a similar structure might arise from shared buying patterns that do not necessarily reflect category-level similarity.
Such discrepancies can mislead the encoder during pre-training. This results in representations that conflate unrelated semantics, ultimately impairing their transferability and weakening downstream performance.

\textbf{Discussion on the Reasonableness of the Distributional Assumption in Contrastive Learning.}
The assumption that the joint distribution $p(x, x')$ can be modeled via a similarity function $s_{ij} = f(x, x')$, and the marginal distribution $p(x)p(x')$ is approximated by $C \cdot e^{\xi(x,x')}$, serves as a theoretical foundation for analyzing contrastive learning objectives. This formulation aligns with the principle behind InfoNCE and related objectives, which aim to maximize mutual information by contrasting positive and negative pairs. Specifically, interpreting the similarity function as a proxy for the log joint probability allows the contrastive loss to be viewed as an estimator of mutual information between different views of the same instance.
The introduction of the bias term $\xi(x,x')$ offers additional flexibility, acknowledging the potential discrepancy between the assumed and true marginal distributions. While this term is not explicitly computed in practice, it enables a more precise theoretical analysis by isolating sources of deviation from ideal assumptions. Furthermore, similar assumptions have been employed in previous works to derive theoretical guarantees and bounds for representation quality and generalization.
Although this modeling does not reflect an exact probabilistic characterization of the data, it provides a mathematically tractable abstraction that facilitates a deeper understanding of the learning dynamics. Therefore, this assumption is reasonable and widely accepted in the theoretical study of contrastive representation learning \cite{DGI,GRACE,MINE}.

\end{document}